\newtheorem{theorem}{Theorem}[section]
\newcommand\ratio{\frac{\lambda}{(i+1)\mu}} 
\newcommand\ratioSimple{\frac{\lambda}{\mu}}
\newcommand\diff{k_{i+1}-k_i}
\newcommand\lambdamuiRatio{\frac{\lambda}{i\mu}}
\newcommand\iRatio{\frac{1}{i}}
\newcommand\betajExpression{\left({\ratioSimple}\right)^{j-k_0}{\left(\iRatio\right)}^{j-k_{i-1}}X_i } 
\newenvironment{proof}
{\begin{flushleft} \begin{description}
\item \textit{\textbf{Proof:}}~ } 
{\hfill{$\Box$} 
\end{description}\end{flushleft} }  
\begin{document}

\title{A Constraint Programming Approach for Solving a Queueing Control Problem}

\author{\name Daria Terekhov \email dterekho@mie.utoronto.ca \\ \name J. Christopher Beck \email jcb@mie.utoronto.ca \\
    \addr Department of Mechanical \& Industrial Engineering \\
    University of Toronto, Canada}

\maketitle

\begin{abstract}
%

In a facility with front room and back room operations, it is useful to
switch workers between the rooms in order to cope with changing customer
demand. Assuming stochastic customer arrival and service times, we seek a
policy for switching workers such that the expected customer waiting time is
minimized while the expected back room staffing is sufficient to perform all
work.  Three novel constraint programming models and several shaving
procedures for these models are presented. Experimental results show that a
model based on closed-form expressions together with a combination of
shaving procedures is the most efficient. This model is able to find and
prove optimal solutions for many problem instances within a reasonable
run-time. Previously, the only available approach was a heuristic
algorithm. Furthermore, a hybrid method combining the heuristic and the best
constraint programming method is shown to perform as well as the heuristic
in terms of solution quality over time, while achieving the same performance
in terms of proving optimality as the pure constraint programming
model. This is the first work of which we are aware that solves such
queueing-based problems with constraint programming.

\end{abstract}

\section{Introduction}
\label{sec:introduction}

The original motivation for the study of scheduling and resource allocation
problems within artificial intelligence (AI) and constraint programming (CP)
was that, in contrast to Operations Research (OR), the full richness of the
problem domain could be represented and reasoned about using techniques from
knowledge representation \cite{Fox83}. While much of the success of
constraint-based scheduling has been due to algorithmic advances
\cite{Baptiste01a}, recently, there has been interest in more
complex problems such as those involving uncertainty
\cite{Policella04a,Sutton07a,Beck07a}. In the broader constraint
programming community there has been significant work over the past five
years on reasoning under uncertainty \cite{Brown06}. Nonetheless, it is
recognized that ``constraint solving under change and uncertainty is in its
infancy'' \cite[p. 754]{Brown06}.


Queueing theory has intensively studied the design and control of systems
for resource allocation under uncertainty \cite{Gross98a}. Although much of the
study has been descriptive in the sense of developing mathematical models of
queues, there is prescriptive work that attempts to develop queue designs
and control policies so as to optimize quantities of interest (e.g., a
customer's expected waiting time) \cite{Tadj05a}. One of the challenges to queueing theory,
however, is that analytical models do not yet extend to richer
characteristics encountered in real-world problems such as rostering for
call centres \cite{Cezik08a}. 

Our long-term goal is to integrate constraint programming and queueing
theory with two ends in mind: the extension of constraint programming to
reason better about uncertainty and the expansion of the richness of the
problems for which queueing theory can be brought to bear. We do not achieve
this goal here. Rather, this paper represents a first step where we solve a
queueing control problem with constraint programming
techniques. Specifically, we develop a constraint programming approach for a
queueing control problem which arises in retail facilities, such as stores
or banks, which have back room and front room operations. In the front room,
workers have to serve arriving customers, and customers form a queue and
wait to be served when all workers are busy. In the back room, work does not
directly depend on customer arrivals and may include such tasks as sorting
or processing paperwork. All workers in the facility are cross-trained and
are assumed to be able to perform back room tasks equally well and serve
customers with the same service rate.  Therefore, it makes sense for the
managers of the facility to switch workers between the front room and the
back room depending both on the number of customers in the front room and
the amount of work that has to be performed in the back room. These managers
are thus interested in finding a switching policy that minimizes the
expected customer waiting time in the front room, subject to the constraint
that the expected number of workers in the back room is sufficient to
complete all required work. This queueing control problem has been studied
in detail by Berman, Wang and Sapna
\citeyear{Berman05a}, who propose a heuristic for solving it.

Our contributions are twofold. Firstly, constraint programming is, for the
first time, used to solve a stochastic queueing control problem. Secondly, a
complete approach for a problem for which only a heuristic algorithm existed
previously is presented.

The paper is organized as follows. Section \ref{sec:background} presents a
description of the problem and the work done by Berman et
al. \citeyear{Berman05a}. 
In the next section, three CP models for this problem are proposed. Sections \ref{sec:dominanceRules} and 
\ref{sec:shaving} present methods for improving the efficiency of these models, focusing on dominance rules and shaving procedures, respectively. Section \ref{sec:expResults} shows experimental results comparing the proposed CP models and combinations of inference methods. The performance of the CP techniques is contrasted with that of the heuristic method of Berman et al. Based on these results, a hybrid method is proposed and evaluated in Section \ref{sec:hybrid}. In Section \ref{sec:discussion}, a discussion of the results is presented. Section \ref{sec:related} describes related problems
and states some directions for future work. Section \ref{sec:conclusions} concludes the paper. An appendix containing the derivations of some expressions used in the paper is also included. 
%
%
%

%
\section{Problem Description}
\label{sec:background}
Let $N$ denote the number of workers in the facility, and let $S$ be the
maximum number of customers allowed in the front room at any one
time.\footnote{The notation used by Berman et al. \citeyear{Berman05a} is
adopted throughout this paper.} When there are $S$
customers present, arriving customers will be not be allowed to join the front
room queue and will leave without service. Customers arrive according to a Poisson process with rate
$\lambda$. Service times in the front room follow an exponential
distribution with rate $\mu$. The minimum expected number of workers that is
required to be present in the back room in order to complete all of the
necessary work is assumed to be known, and is denoted by $B_l$, where $l$
stands for `lower bound'. Only one worker is allowed to be switched at a
time, and both switching time and switching cost are assumed to be
negligible. The goal of the problem is to find an optimal approach to
switching workers between the front room and the back room so as to minimize
the expected customer waiting time, denoted $W_q$, while at the same time
ensuring that the expected number of workers in the back room is at least
$B_l$. Thus, a policy needs to be constructed that specifies how many
workers should be in the front room and back room at a particular time and
when switches should occur.
\subsection{Policy Definition \label{sec:policyDefinition}}
The term \emph{policy} is used in the queueing control literature to
describe a rule which prescribes, given a particular queue state, the
actions that should be taken in order to control the queue. Most of the
research on the optimal control of queues has focused on determining when a
particular type of policy is optimal, rather than on finding the actual
optimal values of the parameters of the policy \cite{Gross98a}. The term
\emph{optimal policy} is used in the literature to mean both the optimal
\emph{type of policy} and the optimal \emph{parameter values} for a given
policy type. The distinction between the two is important since showing that
a particular policy type is optimal is a theoretical question, whereas
finding the optimal values for a specific policy type is a computational
one.

The policy type adopted here is the one proposed by Berman et
al. \citeyear{Berman05a}. A policy is defined in terms of quantities $k_i$,
for $i = 0, \dots, N$ and states that there should be $i$ workers in the
front room whenever there are between $k_{i-1}+1$ and $k_i$ customers
(inclusive) in the front room, for $i = 1, 2, \dots, N$. As a consequence of
this interpretation, the following constraints have to hold: $k_i - k_{i-1}
\ge 1$, $k_0 \ge 0$ and $k_N = S$. For example, consider a facility with $S
= 6$ and $N = 3$, and suppose the policy $(k_0, k_1, k_2, k_3) = (0, 2, 3,
6)$ is employed. This policy states that when there are $k_0+1 = 1$ or $k_1
= 2$ customers in the front room, there is one worker in the front room;
when there are $3$ customers, there are $2$ workers; and when there are 4,
5, or 6 customers, all 3 workers are employed in the front. Alternatively,
$k_i$ can be interpreted as an upper bound on the number of customers that
will be served by $i$ workers under the given policy. Yet another
interpretation of this type of switching policy comes from noticing that as
soon as the number of customers in the front room is increased by $1$ from
some particular switching point $k_i$, the number of workers in the front
room changes to $i+1$.  This definition of a policy forms the basis of the
model proposed by Berman et al., with the switching points $k_i$, $i = 0,
\dots, N-1$, being the decision variables of the problem, and $k_N$ being
fixed to $S$, the capacity of the system.
 
This policy formulation does not allow a worker to be permanently assigned
to the back room: the definition of the $k_i$'s is such that every worker
will, in some system state, be serving customers. Due to this
definition, there exist problem instances which are infeasible with this
policy type yet are feasible in reality. Consequently, the proposed policy
formulation is sub-optimal \cite{Terekhov07c}.
However, the goal of the current work is to demonstrate the applicability of
constraint programming to computing the optimal values for the given policy
type and not to address theoretical optimality questions. Therefore, the
term \emph{optimal policy} is used
throughout the paper to refer to the optimal numerical parameters for the policy type proposed by Berman et al. \citeyear{Berman05a}.
%

\subsection{Berman et al. Model \label{sec:bermanetalmodel}}
In order to determine the expected waiting time and the expected number of
workers in the back room given a policy defined by particular values of
$k_i$, Berman et al. first define a set of probabilities, $P(j)$\label{Pj}, for $j =
k_0, k_0+1, \dots, S$. Each $P(j)$ denotes the steady-state (long-run) probability of the queue being in state $j$, that is, of 
there being exactly $j$ customers in the facility. 
Based on the Markovian properties of this queueing system (exponential inter-arrival and service times), Berman et al. define a set of detailed balance equations for the determination of these probabilities:  
%
%
\begin{eqnarray}
P(j) \lambda &=& P(j+1) 1 \mu \qquad \; \; j = k_0, k_0 + 1, \dots , k_1 - 1 \nonumber \\
P(j) \lambda &=& P(j+1) 2 \mu \qquad \; \; j = k_1, k_1 + 1, \dots , k_2 - 1 \nonumber \\
{\vdots} & {} & {\qquad \; \; \vdots \qquad \qquad \qquad \; \vdots} \label{eqn:balance1} \\
P(j) \lambda &=& P(j+1) i \mu \qquad \; \; j = k_{i-1}, k_{i-1} + 1, \dots , k_i - 1 \nonumber \\
{\vdots} & {} & {\qquad \;\; \vdots \qquad \qquad \qquad \; \vdots} \nonumber \\
P(j) \lambda &=& P(j+1) N \mu \qquad j = k_{N-1}, k_{N-1} + 1, \dots , k_N - 1. \nonumber
\end{eqnarray}
The probabilities $P(j)$ also have to satisfy the equation
${\sum_{j={k_0}}^{S} P(j) = 1}$. Intuitively, in steady-state, the average
flow from state $j$ to state $j+1$ has to be equal to the average flow from
state $j+1$ to state $j$ \cite{Gross98a}. Since $P(j)$ can be viewed as the
long-run proportion of time when the system is in state $j$, the mean flow
from state $j$ to $j+1$ is $P(j) \lambda$ and the mean flow from state $j+1$
to $j$ is $P(j+1) i \mu$ for some $i$ between 1 and $N$ depending on the
values of the switching points.

From these equations, Berman et al. derive the following expressions for each $P(j)$:
\begin{eqnarray}
P(j) &=& \beta_jP(k_0), \label{eqn:calculatingPj}
\end{eqnarray}
where\label{betaj}
\begin{eqnarray}
\beta_j &=& \left\{ \begin{array}{rlll}
1 & \mbox{if} & j=k_0 & {} \\
{}&{}&{} \label{eqn:betaj} \\
\betajExpression & \mbox{if} & k_{i-1}+1 \le j \le k_i & i = 1, \dots, N
\end{array}\right., \\
{}&{}&{} \nonumber \\ \label{Xi1}
X_i &=& \prod_{g=1}^{i-1}{{\left( \frac{1}{g} \right)}^{k_g-k_{g-1}}} \label{eqn:Xi1} \\
{} & {} & (X_1 \equiv 1),  i = 1, \dots, N.  \nonumber
\end{eqnarray}
$P(k_0)$ can be calculated using the following equation, which is derived by summing both sides of 
Equation (\ref{eqn:calculatingPj}) over all values of $j$:
\begin{eqnarray}
P(k_0)\sum_{j=0}^{S}\beta_j = 1.
\end{eqnarray}

All quantities of interest can be expressed in terms
of the probabilities $P(j)$. Expected number of workers in the front room\label{F} is
\begin{eqnarray}
F &=& \sum_{i=1}^N  {\sum_{j={k_{i-1}+1}}^{k_i} iP(j)}, \label{eqn:Fexpression1} 
\end{eqnarray}
while the expected number of workers in the back room\label{B1} is
\begin{eqnarray}
B &=& N - F \label{eqn:Bexpression}.
\end{eqnarray}
The expected number of customers\label{L} in the front room is 
\begin{eqnarray}
L &=& \sum_{j={k_0}}^{S} jP(j). \label{eqn:Lexpression1}
\end{eqnarray}
Expected waiting time in the queue can be expressed as
\begin{eqnarray}
W_q &=& \frac{L}{\lambda (1 - P(k_N))} - \frac{1}{\mu}. \label{eqn:Wqexpression}
\end{eqnarray}
This expression is derived using Little's Laws for a system of capacity $k_N
= S$. 

Given a family of switching policies\label{Kbf}\label{Kpolicy1} $\textbf{K} = \{ K; K =
\{k_0, k_1, ..., k_{N-1}, S \}$, $k_i$ integers, $k_i -
k_{i-1} \ge 1$, $k_0 \ge 0, k_{N-1} < S \}$, the   
problem can formally be stated as:
\begin{eqnarray}
& minimize_{K \in \textbf{K}} \; {W_q} \label{eqn:problem1} \\
& s.t. \; B \ge B_l \nonumber \\
& \sum_{j={k_0}}^{S} P(j) = 1 \nonumber \\
& equations \; (\ref{eqn:balance1}), (\ref{eqn:Fexpression1}), (\ref{eqn:Bexpression}),(\ref{eqn:Lexpression1}), (\ref{eqn:Wqexpression}). \nonumber  
\end{eqnarray}
Berman et al. \citeyear{Berman05a} refer to this problem as problem $P_1$. It is important to note that $B$, $F$ and $L$ are expected values and can be real-valued. Consequently, the constraint $B \ge B_l$ states that 
the expected number of workers in the back room resulting from the realization of any policy should be greater than or equal to the minimum expected number of back room
workers needed to complete all work in the back room. At any particular time
point, there may, in fact, be fewer than $B_l$ workers in the back room. 

As far as we are aware, the computational complexity of problem $P_1$
has not been determined. Berman et al. (p. 354) state that solving problem
$P_1$ ``exactly is extremely difficult since the constraints set (the
detailed balance equations) changes when the policy changes.''

\subsubsection{Berman et al.'s Heuristic\label{sec:heuristic}}
Berman et al. \citeyear{Berman05a} propose a heuristic method for the solution of this problem. 
This method is based on two corollaries and a theorem, which are stated and proved by the authors. 
These results are key to the understanding of the problem, and are, therefore, repeated below.

\begin{theorem} [Berman et al.'s Theorem 1] \label{thm:thm1}	
Consider two policies $K$ and $K'$ which are equal in all but one $k_i$. In particular, suppose that the value of $k'_J$ equals $k_J - 1$, 
for some $J$ from the set $\{0, ..., N-1\}$ such that $k_J-k_{J-1} \ge 2$, while $k'_i = k_i$ for all $i \ne J$. Then (a) $W_q(K) \ge W_q(K')$, (b) $F(K) \le F(K')$, (c) $B(K) \ge B(K')$.
\end{theorem}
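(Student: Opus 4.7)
The plan is to exploit the locality of the perturbation: changing $k_J$ to $k_J-1$ only reassigns the single state $j=k_J$ from the $J$-worker region to the $(J+1)$-worker region, leaving every other state's worker count untouched. I would begin by substituting into Equation~(\ref{eqn:betaj}) to show that, if $\beta'_j$ denotes the coefficient associated with $K'$, then
\[
\beta'_j = \beta_j \ \ (j \le k_J-1), \qquad \beta'_j = \frac{J}{J+1}\,\beta_j \ \ (j \ge k_J).
\]
For $j < k_J$ this is immediate because the formula involves only $k_i$ with $i < J$ and $X_i$ with $i \le J$, none of which depend on $k_J$. For $j \ge k_J$ a short computation using the recursion in Equation~(\ref{eqn:Xi1}) shows that each $X_i$ with $i \ge J+1$ picks up one extra factor of $1/(J+1)$ and loses one factor of $1/J$; combined with the shift in the exponent of $(1/i)$, this produces the uniform scaling $J/(J+1)$ on the whole high range.

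Normalizing through $P(k_0)\sum_j \beta_j = 1$ then yields, with $c := J/(J+1) < 1$, that $P'$ places strictly more mass on $\{j<k_J\}$ and strictly less on $\{j \ge k_J\}$; in particular $P'(S) < P(S)$. To obtain parts (b) and (c), I would combine this with the telescoping identity
\[
F \;=\; \sum_{j=k_0+1}^{S} n_K(j)\,P(j) \;=\; \frac{\lambda}{\mu}\sum_{j=k_0}^{S-1} P(j) \;=\; \frac{\lambda}{\mu}\bigl(1-P(S)\bigr),
\]
where $n_K(j)$ denotes the number of workers assigned to state $j$ under $K$; this identity follows directly from the balance equations~(\ref{eqn:balance1}) by rewriting $n_K(j)\mu P(j) = \lambda P(j-1)$ and collapsing the sum. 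Since $P'(S) < P(S)$, the identity gives $F(K') \ge F(K)$, and $B = N - F$ then yields $B(K') \le B(K)$, proving (b) and (c).

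For part (a) I would plug the relations above into Equation~(\ref{eqn:Wqexpression}). With the shorthand $\Sigma_L = \sum_{j<k_J}\beta_j$, $\Sigma_H = \sum_{j\ge k_J}\beta_j$, $T_L = \sum_{j<k_J} j\beta_j$, $T_H = \sum_{j\ge k_J} j\beta_j$, the inequality $W_q(K) \ge W_q(K')$ reduces after cross-multiplying to
\[
(1-c)\bigl[\,T_H\,\Sigma_L \;-\; T_L\,(\Sigma_H-\beta_S)\,\bigr] \;\ge\; 0.
\]
Expanding the products as double sums over pairs $(j,j')$ with $j \ge k_J$ and $j' < k_J$, the bracketed quantity equals
\[
\sum_{j=k_J}^{S-1}\sum_{j'=k_0}^{k_J-1} (j-j')\,\beta_j\beta_{j'} \;+\; \sum_{j'=k_0}^{k_J-1} S\,\beta_S\,\beta_{j'},
\]
which is nonnegative since $j \ge k_J > j'$ in every term.

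The main obstacle will be the algebraic manipulation in (a): one must keep careful track of the two distinct normalizing sums $\Sigma_L+\Sigma_H$ and $\Sigma_L+c\Sigma_H$, cross-multiply cleanly, and factor out $(1-c)$ so that the residual inequality becomes a manifestly signed double sum. A secondary task is verifying the uniform scaling $\beta'_j/\beta_j = J/(J+1)$ for all $j \ge k_J$, including those in the higher regions $i \ge J+2$, where both the exponent of $(1/(J+1))$ and the value of $X'_i$ shift; one must check that these two contributions always combine to exactly the same constant ratio.
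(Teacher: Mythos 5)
First, a point of reference: the paper does not actually prove this theorem. It is quoted from Berman, Wang and Sapna (2005) with the remark that the results ``are stated and proved by the authors'' of that earlier paper, so there is no in-paper proof to compare your argument against. Judged on its own merits, your argument is essentially correct for $J \ge 1$ and is attractively self-contained. The locality observation is right: only state $k_J$ changes its worker count (from $J$ to $J+1$), so the ratios $\beta_{j+1}/\beta_j = \lambda/(n(j+1)\mu)$ change at exactly one step, which gives the uniform rescaling $\beta'_j = \frac{J}{J+1}\beta_j$ on $j \ge k_J$ (and I checked that the bookkeeping through $X_{J+1}$ and $X_i$, $i\ge J+2$, does combine to that single constant). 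The throughput identity $F = \frac{\lambda}{\mu}(1-P(S))$ is a genuinely nice route to (b) and (c): it reduces both to the single inequality $P'(S) < P(S)$, which follows immediately from $c<1$ after normalization. For (a), I verified the cross-multiplication: with $D = \Sigma_L+\Sigma_H$ and $D' = \Sigma_L + c\Sigma_H$ the difference $L(1-P'(S))DD' - L'(1-P(S))DD'$ does reduce to $(1-c)\bigl[T_H\Sigma_L - T_L(\Sigma_H-\beta_S)\bigr]$, and your double-sum expansion of the bracket is correct and manifestly nonnegative.

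The one genuine gap is the case $J=0$, which the theorem must cover (the heuristic decrements $k_0$ whenever $k_0>0$, citing exactly this theorem). There your scaling claim $\beta'_j = \frac{J}{J+1}\beta_j$ degenerates to zero and, more importantly, the state space itself changes: under $K'$ the new state $k_0-1$ becomes recurrent with $\beta'_{k_0-1}=1$ as the new base, while $\beta'_j = \frac{\lambda}{\mu}\beta_j$ for $j\ge k_0$. The conclusion still holds --- one gets $P'(S) = \frac{(\lambda/\mu)\beta_S}{1+(\lambda/\mu)D} < \frac{\beta_S}{D} = P(S)$, so (b) and (c) survive via your throughput identity, and for (a) the analogous cross-multiplication collapses to $T - (k_0-1)(D-\beta_S) \ge 0$, which holds because $T = \sum_{j\ge k_0} j\beta_j \ge k_0 D$ --- but this is a separate computation that your write-up does not contain and that your stated scaling lemma cannot produce. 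You should either restrict the uniform-scaling lemma to $J\ge 1$ and add the $J=0$ case explicitly, or reformulate the perturbation in terms of the one-step ratio change so that both cases are handled at once.
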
	
	
As a result of Theorem \ref{thm:thm1}, it can be seen that two policies exist which have special properties. Firstly, consider the policy\label{Khat} 
\[\hat K  = \left\{ k_0 = 0, k_1= 1, k_2 = 2,..., k_{N-1} = N-1, k_N = S \right\}. \] This policy results in the largest possible $F$, and 
the smallest possible $B$ and $W_q$. Because this policy yields the smallest possible expected waiting time, it is optimal if it is feasible. 
On the other hand, the smallest possible $F$ and the largest possible $W_q$ and $B$ are obtained by applying the policy\label{Kdoublehat} 
\[\hat {\hat K} = \left\{ k_0 = S - N, k_1 = S - N + 1,..., k_{N-1} = S - 1, k_N = S \right\}.\] 
Therefore, if this policy is infeasible, the problem (\ref{eqn:problem1}) is infeasible also.

Berman et al. propose the notions of eligible type 1 and type 2 components. An \emph{eligible type 1 component} is a switching point $k_i$ satisfying the condition that $k_i - k_{i-1} > 1$ for $0 < i < N$ or $k_i > 0$ for $i = 0$.
A switching point $k_i$ is an \emph{eligible type 2 component} 
if $k_{i+1} - k_i > 1$ for $0 \le i < N$. More simply, an eligible type 1 component is a $k_i$ variable which, if decreased by 1, will still be greater than $k_{i-1}$, while an eligible type 2 component is a $k_i$ variable which, if increased by $1$, will remain smaller than $k_{i+1}$. Eligible type 1 components and eligible type 2 components will further be referred to simply as type 1 and type 2 components, respectively.
	
Based on the definitions of policies $\hat K$ and $\hat {\hat K}$, the notions of type 1 and 2 components, 
and Theorem \ref{thm:thm1}, Berman et al. propose a heuristic, which has the same name as the problem it is used for, $P_1$:
\begin{enumerate}{\setlength{\itemsep}{0pt}}
\item Start with $K = \hat {\hat {K}}$.
\item If $B(K) < B_l$, the problem is infeasible. Otherwise, let $imb\_{W_q} = W_q(K)$ and $imb\_K = K$. Set $J = N$.\item Find the smallest $j^{*}$ s.t. $0 \le j^{*} < J$ and $k_{j^{*}}$ is a type 1 component. If no such $j^{*}$ exists, go to 5. Otherwise, set $k_{j^{*}}  = k_{j^{*}}  - 1$. If $B(K) < B_l$,  set $J = j^{*}$ and go to 5. If $B(K) \ge B_l$, go to 4.
\item If $W_q(K) < imb\_{W_q}$, let $imb\_ {W_q} = W_q(K)$ and $imb\_K = K$. Go to 3.
\item Find the smallest $j^{*}$ s.t. $0 \le j^{*} < J$ and $k_{j^{*}}$ is a type 2 component. If no such $j^{*}$ exists, go to 6. Otherwise, set $k_{j^{*}} = k_{j^{*}} + 1$. If $B(K) < B_l$, repeat 5. If $B(K) \ge B_l$, go to 4.
\item Stop and return $imb\_K$ as the best solution.
\end{enumerate}

Limiting the choice of $j^{*}$ to being between 0 and $J$, and resetting $J$ every time an infeasible policy is found, prevents the heuristic from entering an infinite cycle. The heuristic guarantees optimality only when the policy it returns is $\hat {K}$ or $\hat {\hat{K}}$.


Empirical results regarding the performance of heuristic $P_1$ are not 
presented in the paper by Berman et al. \citeyear{Berman05a}.  
In particular, it is not clear how close policies provided by $P_1$ are to the optimal policies.

\subsection{Summary of Parameters and Quantities of Interest}

Berman et al.'s model of problem $P_1$ requires five input parameters (Table \ref{tab:problemParameters}) and has expressions for calculating four main quantities of interest (Table \ref{tab:quantitiesOfInterest}), most of which are non-linear.

\begin{table}
\begin{center}
\begin{tabular}{|c|c|} \hline
Parameter & Meaning \\ \hline \hline
$S$ & front room capacity \\ \hline
$N$ & number of workers in the facility \\ \hline
$\lambda$ & arrival rate \\ \hline
$\mu$ & service rate \\ \hline
$B_l$ & expected number of workers required in the back room \\ \hline
\end{tabular}
\caption{\label{tab:problemParameters}Summary of problem parameters.}
\end{center}
\end{table}

\begin{table}
\begin{center}
\begin{tabular}{|c|c|c|} \hline
Notation & Meaning & Definition \\ \hline \hline
$F$ & expected number of workers in the front room & Equation (\ref{eqn:Fexpression1}) \\ \hline
$B$ & expected number of workers in the back room & Equation (\ref{eqn:Bexpression}) \\ \hline
$L$ & expected number of customers in the front room & Equation (\ref{eqn:Lexpression1}) \\ \hline
$W_q$ & expected customer waiting time & Equation (\ref{eqn:Wqexpression}) \\ \hline
\end{tabular}
\caption{\label{tab:quantitiesOfInterest}Summary of quantities of interest.}
\end{center}
\end{table}
%

%
%
%
%
%
%
%
%
%
\section{Constraint Programming Models}
\label{sec:cp_models}
Some work has been done on extending CP to stochastic problems 
\cite{Tarim06a,Tarim05b,Walsh02}. 
Our problem is different from the problems addressed in these papers because
all of the stochastic information can be explicitly encoded as constraints and expected values, and
there is no need for either stochastic variables or scenarios. 

It is known that creating an effective constraint programming model usually requires one to experiment with various problem representations \cite{Smith06}. Consequently, here we present, and experiment with, three alternative models for problem $P_1$ (Equation (\ref{eqn:problem1})). Although the first model is based directly on the formulation of Berman et al., further models were motivated by standard CP modelling techniques, such as the use of dual variables \cite{Smith06}.
We investigate the following three CP models:
\begin{itemize}
\item The \textit{If-Then} model is a CP version of the formal definition of
      Berman et al.  
\item The \textit{PSums} model uses a slightly different set of
      variables, and most of the constraints are based on closed-form 
      expressions derived from the constraints that are used in the
      \textit{If-Then} model. 
\item The \textit{Dual} model includes a set of dual decision variables in
      addition to the variables used in the \textit{If-Then} and \textit{PSums} models. 
	Most of the constraints of this model are expressed in terms of these dual variables. 
\end {itemize}

\subsection{Common Model Components}

There are a number of modelling components that are common to each of the
constraint models. Before presenting the models, we therefore present their
common aspects.

\subsubsection{Decision Variables}


All three of the proposed models have
a set of decision variables $k_i$, $i = 0, 1, \dots, N$, representing
the switching policy. Each $k_i$ from this set has the domain $[{i, i+1, \dots, S-N+i}]$ 
and has to satisfy the constraint $k_i < k_{i+1}$ (since the number of workers in the front room, $i$, 
increases only when the number of customers, $k_i$, increases).
Due to Berman et al.'s policy definition, $k_N$ must equal $S$.

\subsubsection{Additional Expressions}
All models include variables and constraints for the representation of the
balance equations (Equation (\ref{eqn:balance1})), and expressions for $F$,
the expected number of workers in the front room, and $L$, the expected
number of customers in the front room. However, these representations differ
slightly depending on the model, as noted below in Sections
\ref{subsec:ifThenModel}, \ref{subsec:pSumsModel} and
\ref{subsec:dualModel}. 
%

A set of auxiliary variables, $\beta Sum(k_i)$, defined as $\sum_{j=k_{i-1}+1}^{k_i} \beta_j$, for all $i$ from 1 to $N-1$, is included in each of the models (see Equations (\ref{eqn:calculatingPj})--(\ref{eqn:Xi1}) for the definition of $\beta_j$). These are necessary for representing Equation (\ref{eqn:restatementOfBetaConstraint}), which relates these variables to $P(k_0)$, a floating point variable with domain $[0..1]$ representing the probability of having $k_0$ customers in the facility. These auxiliary variables and constraint ensure that an assignment of all decision variables leads to a unique solution of the balance equations. We discuss the formal definition of these auxiliary variables in Section \ref{subsec:betaVariables}. 
\begin{eqnarray}
P(k_0)\sum_{i=0}^{N} \beta Sum(k_i) &=& 1 \label{eqn:restatementOfBetaConstraint}
\end{eqnarray}

The back room constraint, $B \ge B_l$, is stated in all models as $N-F \ge B_l$. The equation for $W_q$ is stated in all models as Equation (\ref{eqn:Wqexpression}).
\subsection{\textit{If-Then} Model\label{subsec:ifThenModel}}
The initial model 
includes the variables $P(j)$ for $j = k_0, k_0+1, \dots, k_1, k_1+1, \dots, k_N-1, k_N$, each
representing the steady-state probability of there being $j$ customers in the front
room. These floating point variables with domain $[0..1]$ have to satisfy
a system of balance equations (Equation (\ref{eqn:balance1}))
and are used to express $L$ and $F$.

The complete \textit{If-Then} model is presented in Figure \ref{fig:ifThenModel}.

\begin{figure}
\begin{eqnarray}
minimize & W_q & {} \nonumber \\
subject \;\; to & {} & \nonumber \\
k_i & < & k_{i+1} \indent \indent \indent \indent \indent \;\; \forall i \in \{0,1, \dots, N-1\}; \nonumber \\
k_N &=& S; \nonumber \\
(k_i \le j \le k_{i+1} - 1) & \rightarrow & P(j)\lambda = P(j+1)(i+1)\mu, \nonumber \\
{} & {} & \forall i \in \{0,1, \dots, N-1\}, \forall j \in \{0, 1, \dots, S-1\}; \nonumber \\
{} \nonumber \\
(j < k_0) &\le& (P(j) = 0), \indent \indent \indent \; \forall j \in \{0,1, \dots, S-N-1\}; \nonumber \\ 
\sum_{j=0}^{S} P(j) & = & 1; \nonumber \\ 
(k_0 = j) & \rightarrow & P(j)\sum_{i=0}^{N} \beta Sum(k_i) = 1, \nonumber \\ 
{} & {} & \indent \indent \indent \indent \indent \indent \;\;\;\; \forall j \in \{0, 1, \dots, S\}; \nonumber \\
L &=& \sum_{j=0}^{S} jP(j); \nonumber \\
(k_{i-1}+1 \le j \le k_{i}) & \rightarrow & r(i, j) = iP(j), \nonumber \\
{} & {} & \forall i \in \{1, 2, \dots, N\}, \forall j \in \{0, 1, \dots, S\}; \nonumber \\
(k_{i-1}+1 > j \;\; \vee \;\; j > k_{i}) & \rightarrow & r(i, j) = 0, \nonumber \\
{} & {} & \forall i \in \{1, 2, \dots, N\}, \forall j \in \{0, 1, \dots, S\}; \nonumber \\
F &=& \sum_{i=1}^{N} {\sum_{j=0}^{S} r(i, j)}; \nonumber \\
W_q &=& \frac{L}{\lambda (1 - P(k_N))} - \frac{1}{\mu}; \nonumber \\
N - F & \ge & B_l; \nonumber \\ 
auxiliary & {} & constraints. \nonumber 
\end{eqnarray} \nonumber
\caption{\label{fig:ifThenModel}Complete \textit{If-Then} Model}
\end{figure}

\subsubsection{Balance Equation Constraints}\label{sec:balEquationConstraints}

The balance equations are represented by a set of if-then constraints. For
example, the first balance equation, $P(j)\lambda = P(j+1)\mu$ for $j = k_0,
k_0+1, ..., k_1 - 1$, is represented by the constraint $(k_0 \le j \le k_1 -
1) \rightarrow P(j)\lambda = P(j+1)\mu$. Thus, somewhat inelegantly, an
if-then constraint of this kind has to be added for each $j$ between $0$ and
$S-1$ (inclusive) in order to represent one balance equation. In order to
represent the rest of these equations, this technique has to be applied for
each pair of switching points $k_i$, $k_{i+1}$ for $i$ from $0$ to
$N-1$. This results in a total of $N \times S$ if-then constraints.

The probabilities $P(j)$ also have to satisfy the constraint $\sum_{j = k_0}^{S} P(j) = 1$. The 
difficulty with this constraint is the fact that the sum starts at $j = k_0$, where $k_0$ is a decision 
variable. In order to deal with this complication, we add the meta-constraint $((j < k_0) \le (P(j) = 0))$ for each $j$ from the set $\{0, 1, \dots, S-N-1\}$.\footnote{We do not need to add 
this constraint for all $j$ from 0 to $S$ because the upper bound of the domain of $k_0$ is $S-N$.} This implies that all values of $P(j)$ 
with $j$ less than $k_0$ will be 0 and allows us to express the sum-of-probabilities constraint as
$\sum_{j=0}^{S} P(j) = 1$. 
 
%
\subsubsection{Expected Number of Workers Constraints}
A set of if-then constraints also has to be included
in order to represent 
Equation (\ref{eqn:Fexpression1}) 
as a constraint in our model. This is due to the dependence 
of this constraint on sums of variables between two switching points, which
are decision variables. 
%
%
More specifically, we add a set of variables $r(i, j)$ for representing the 
product of $i$ and $P(j)$ when $j$ is between $k_{i-1}+1$ and $k_{i}$, and  
the constraints $(k_{i-1}+1 \le j \le k_{i}) \rightarrow r(i, j) = iP(j)$ and
$(k_{i-1}+1 > j \;\; \vee \;\; j > k_{i}) \rightarrow r(i, j) = 0$ for all $i$ from 1 to $N$ and for all $j$
from 0 to $S$.
The total number of these if-then constraints is $2N(S+1)$.
$F$ can then be simply stated as a sum over the indices $i$ and $j$ of variables $r(i,j)$. 
\subsubsection{Expected Number of Customers Constraint}
$L$ is defined according to Equation (\ref{eqn:Lexpression1}). Since the meta-constraint
$((j < k_0) \le (P(j) = 0))$ has been added to the model in order to ensure that $P(j) = 0$ 
for all $j < k_0$, the constraint for $L$ can be simply stated as
the sum of the products of $j$ and $P(j)$ over all $j$ from 0 to $S$:
\begin{eqnarray}
L &=& \sum_{j=0}^{S} jP(j). \label{eqn:simplerL}
\end{eqnarray}

\subsubsection{Auxiliary Variables and Constraints\label{subsec:betaVariables}}

The model includes a set of $N-2$ auxiliary expressions $X_i$ for all $i$ from 3 to $N$ ($X_1$ and $X_2$ are always equal to 1). Instead of including $S$ $\beta_j$ variables (refer to Equation (\ref{eqn:calculatingPj}) for the definition of $\beta_j$), we use $N+1$ continuous auxiliary variables $\beta Sum(k_i)$ with domain $[0 \dots 1+S{\left(\frac{\lambda}{\mu}\right)}^S]$, which represent sums of $\beta_j$ variables between $j = k_{i-1}+1$ and $j = k_i$ (inclusive). $\beta Sum(k_0)$ is constrained to equal 1, while the rest of these variables are defined according to Equation (\ref{eqn:betaSumEquation}). 
The validity of this equation is proved in the appendix.
%
\begin{eqnarray}\label{betaSum1}
\beta Sum(k_i) = \sum_{j=k_{i-1}+1}^{k_i} \beta_j
=\left\{ 
\begin{array}{rr}
X_i{\left(\displaystyle\ratioSimple\right)}^{k_{i-1}-k_0+1}\left(\displaystyle\iRatio\right) \left[\frac{1-{\left(\displaystyle\lambdamuiRatio\right)}^{k_i-k_{i-1}}}{1-\left(\displaystyle\lambdamuiRatio\right)} \right] & \mbox{if } \lambdamuiRatio \neq 1\\
\\
X_i{\left(\displaystyle\ratioSimple\right)}^{k_{i-1}-k_0+1}\left(\displaystyle\iRatio\right) (k_i-k_{i-1}) & otherwise. 
\end{array} \right. \label{eqn:betaSumEquation}	\\
\nonumber \\
\forall i \in \{1, \dots, N\}; \nonumber
\end{eqnarray}
The sum $\sum_{j=k_0}^{k_N}\beta_j$ can then be expressed as $\sum_{i=0}^{N} \beta Sum(k_i)$. The requirement that $P(k_0)\times \sum_{j={k_0}}^{k_N}\beta_j$ has to equal 1 is stated as a set of if-then constraints $(k_0 = j)$ $\rightarrow$ $P(j) \times$ \\ $\sum_{i=0}^{N} \beta Sum(k_i)$ $=$ $1$ for all $j \in \{0,1, \dots, S\}$. 

In summary, the $auxiliary \; constraints$ that are present in the model are: $\beta Sum(k_0) = 1$, 
Equation (\ref{eqn:betaSumEquation}) and
$X_i = \prod_{g=1}^{i-1}{{\left( \frac{1}{g} \right)}^{k_g-k_{g-1}}} \; \forall i \in \{3, \dots, N\}$.  

%

%

The \textit{If-Then} model includes a total of $3NS + 2N + S + 1$ if-then constraints, which
are often ineffective in search because propagation only occurs either
when the left-hand side is satisfied or when the right-hand side becomes
false. Consequently, the next model attempts to avoid, as much as possible,
the use of such constraints.
\subsection{\textit{PSums} Model\label{subsec:pSumsModel}}
%
%
The second CP model is based on closed-form expressions derived from the balance equations (the details of the derivations are provided in 
the appendix). 
The set of $P(j)$ variables from the
formulation of Berman et al. is replaced by a set of $PSums(k_i)$\label{PSums1} variables
for $i = 0, \dots, N-1$, together with a set of probabilities $P(j)$\label{Pki1} for $j =
k_0, k_1, k_2, \dots, k_N$. Note that $P(j)$ is defined for each
switching point only, not for all values from $\{0, 1, \dots, S\}$. The $PSums(k_i)$
variable represents the sum of all probabilities between $k_i$ and
$k_{i+1}-1$. 
%

The complete \textit{PSums} model is presented in Figure \ref{fig:pSumsModel}. The remainder of this section provides details of this model.

\subsubsection{Probability Constraints}
Balance equations are not explicitly stated in this model. 
However, expressions for $P(k_i)$ and $PSums(k_i)$ are derived in such a way that the
balance equations are satisfied. 
%
%
The $PSums(k_i)$ variables are defined in Equation (\ref{eqn:pSums}). Equation (\ref{eqn:switchProb}) is a recursive 
formula for computing $P(k_{i+1})$.
 
%
%
\begin{eqnarray} \nonumber
PSums(k_i) &=& \sum_{j=k_i}^{k_{i+1}-1} P(j) \\ \nonumber
\\ 
{} &=& \left\{ 
\begin{array}{rr}
P(k_i) \frac{\displaystyle1-{\left[\displaystyle\ratio\right]}^{k_{i+1}-k_i}}{\displaystyle1-\displaystyle\ratio} & \mbox{if } \ratio \neq 1\\
\\
P(k_i) (\diff)  & otherwise. 
\end{array} \right. \label{eqn:pSums} 	
\end{eqnarray}
\begin{eqnarray}
P(k_{i+1}) &=& {\left(\ratio\right)}^{k_{i+1}-k_i}P(k_i), \;\;\; \forall i \in \{0, 1, \dots, N-1\}. \label{eqn:switchProb}
\end{eqnarray}
Additionally, the probability variables have to satisfy the constraint $\sum_{i=0}^{N-1} PSums(k_i) + P(k_N) = 1$. 
\begin{figure}\label{fig:PSumsModel}
\begin{eqnarray}
minimize & W_q & {} \nonumber \\
subject \;\; to & {} & \nonumber \\
k_i & < & k_{i+1} \indent \indent \indent \indent \indent \indent \indent \indent \indent \indent \indent \; \forall i \in \{0,1, \dots, N-1\}; \nonumber \\
k_N &=& S; \nonumber \\
P(k_{i+1}) &=& {\left(\ratio\right)}^{k_{i+1}-k_i}P(k_i), \nonumber \\
{} & {} & \indent \indent \indent \indent \indent \indent \indent \indent \indent \indent \indent \indent \;\; \forall i \in \{0, 1, \dots, N-1\}; \nonumber \\
{} \nonumber \\
PSums(k_i) &=& \left\{ 
\begin{array}{rr}
P(k_i) \frac{\displaystyle1-{\left[\displaystyle\ratio\right]}^{k_{i+1}-k_i}}{\displaystyle1-\displaystyle\ratio} & \mbox{if } \ratio \neq 1\\
\\
P(k_i) (\diff) & otherwise 
\end{array} \right. ,\nonumber \\
{} & {} & \indent \indent \indent \indent \indent \indent \indent \indent \indent \indent \indent \indent \;\; \forall i \in \{1, 2, \dots, N-1\}; \nonumber \\
\sum_{i=0}^{N-1} PSums(k_i) &{+}& P(k_N) =  1; \nonumber \\
P(k_0) &{\times}& \sum_{i=0}^{N} \beta Sum(k_i) = 1; \nonumber \\ 
F &=& \sum_{i=1}^{N}{i\left[PSums(k_{i-1})-P(k_{i-1})+P(k_i) \right];} \nonumber \\
L &=& \sum_{i = 0}^{N-1} L(k_i) + k_NP(k_N), \nonumber \\
L(k_i) &=& k_iPSums(k_i)+P(k_i)\ratio \nonumber \\
{} & {\times} & \frac{ {\left(\ratio\right)}^{k_{i+1}-k_i-1}(k_i-k_{i+1}) + {\left(\ratio\right)}^{k_{i+1}-k_i}(k_{i+1}-k_i-1)+1 }{{\left(1-\ratio\right)}^2}, \nonumber \\
{} & {} & \nonumber \\
{} & {} & \indent \indent \indent \indent \indent \indent \indent \indent \indent \indent \indent \indent \; \forall i \in \{0,1, \dots, N-1\}; \nonumber
\end{eqnarray}
\begin{eqnarray}
W_q &=& \frac{L}{\lambda (1 - P(k_N))} - \frac{1}{\mu}; \nonumber \\
N - F & \ge & B_l; \nonumber \\ 
auxiliary & {} & constraints. \nonumber
\end{eqnarray}
\caption{\label{fig:pSumsModel}Complete \textit{PSums} Model}
\end{figure}
\subsubsection{Expected Number of Workers Constraint}
$F$, the expected number of workers in the front room, can be expressed in terms of $P(k_i)$ and $PSums(k_i)$ as shown in Equation (\ref{eqn:PSumsFexpression}):
\begin{eqnarray}
F &=& \sum_{i=1}^{N}{i\left[PSums(k_{i-1})-P(k_{i-1})+P(k_i) \right]}. \label{eqn:PSumsFexpression}
\end{eqnarray}
\subsubsection{Expected Number of Customers Constraints}
The equation for $L$ is
\begin{eqnarray}
L &=& \sum_{i = 0}^{N-1} L(k_i) + k_NP(k_N) \label{eqn:PSumsLexpression}
\end{eqnarray}
where\label{Lki1}
\begin{eqnarray} \nonumber
L(k_i) &=& k_iPSums(k_i)+P(k_i)\ratio \\ \nonumber 
{} & {} & \times \left[\frac{ {\left(\ratio\right)}^{k_{i+1}-k_i-1}(k_i-k_{i+1}) + {\left(\ratio\right)}^{k_{i+1}-k_i}(k_{i+1}-k_i-1)+1 }{{\left(1-\ratio\right)}^2} \right]. \nonumber
\end{eqnarray}

\subsubsection{Auxiliary Constraints}
The $auxiliary \; constraints$ are exactly the same as in the \textit{If-Then} model. However, 
the requirement that $P(k_0)\times \sum_{j={k_0}}^{k_N}\beta_j$ has to equal 1 is stated as $P(k_0) \sum_{i=0}^{N} \beta Sum(k_i) = 1$, rather than as a set of if-then constraints, because this model has an explicit closed-form expression for the variable $P(k_0)$. 
  

\subsection{\textit{Dual} Model\label{subsec:dualModel}}
The problem can be alternatively formulated using variables $w_j$\label{wj}, which
represent the number of workers in the front room when there are $j$
customers present. The $w_j$ variables can be referred to as the \emph{dual} variables because, 
compared to the $k_i$'s, 
the roles of variables and values are reversed \cite{Hnich04,Smith06}. As stated by Smith, the use of dual variables in a constraint programming model is beneficial if some constraints of the problem are easier to express using these new variables. This is the case for our problem -- in fact, the use of dual variables allows us to significantly reduce the number of if-then constraints necessary for stating relations between the probabilities.
 
In our \textit{Dual} model, 
there are $S+1$ $w_j$ variables, each
with domain $[0, 1, \dots, N]$. These variables have to satisfy the following equations:
$w_0 = 0$, $w_S = N$ and $w_j \le w_{j+1}$ for all $j$ from $0$ to $S-1$.
Additionally, the complete set of $k_i$ variables is included in this model, since some 
constraints are easier to express using the $k_i$'s rather than the $w_j$'s.

The complete \textit{Dual} model is presented in Figure \ref{fig:dualModel}, while the details are discussed below.

\begin{figure}
\begin{eqnarray}
minimize & W_q & {} \nonumber \\
subject \;\; to & {} & \nonumber \\
w_0 = 0,& {} & w_S = N \nonumber \\
w_j &\le& w_{j+1} \indent \indent \indent \indent \indent \indent \indent \indent \indent \indent \indent \; \forall j \in \{0,1, \dots, S-1\}; \nonumber \\
k_i & < & k_{i+1} \indent \indent \indent \indent \indent \indent \indent \indent \indent \indent \indent  \;\; \forall i \in \{0,1, \dots, N-1\}; \nonumber \\
k_N &=& S; \nonumber \\
w_j < w_{j+1} \;\; & \leftrightarrow & \;\; k_{w_{j}} = j  \indent \indent \indent \indent \indent \indent \indent \indent \indent \indent \forall j \in \{0, 1, \dots, S-1\}; \nonumber \\
w_j = w_{j+1} \;\; & \leftrightarrow & \;\; k_{w_{j}} \neq j \;\;\; \indent \indent \indent \indent \indent \indent \indent \indent \indent \;\; \forall j \in \{0, 1, \dots, S-1\}; \nonumber \\
w_j = i \;\; & \leftrightarrow & \;\; k_{i-1}+1 \le j \le k_i \indent \;\;\;\; \forall i \in \{1, 2, \dots, N\}, \forall j \in \{0, 1, \dots, S\}; \nonumber \\ 
P(j)\lambda &=& P(j+1)w_{j+1}\mu \indent \indent \indent \indent \indent \indent \indent \indent \;\; \forall j \in \{0, 1, \dots, S-1\}; \nonumber \\
(k_0 = j) & \rightarrow & P(j)\sum_{i=0}^{N} \beta Sum(k_i) = 1, \;\; \indent \indent \indent \indent \indent \;\;  \forall j \in \{0, 1, \dots, S\}; \nonumber \\
(j < k_0) &\le& (P(j) = 0), \indent \indent \indent \indent \indent \indent \indent \;\;\;\; \forall j \in \{0,1, \dots, S-N-1\}; \nonumber \\
\sum_{j=0}^{S} P(j) & = & 1; \nonumber \\ 
F &=& \sum_{j = 0}^{S} w_jP(j); \nonumber \\
 \nonumber \\
L &=& \sum_{j=0}^{S} jP(j); \nonumber \\
W_q &=& \frac{L}{\lambda (1 - P(k_N))} - \frac{1}{\mu}; \nonumber \\
N - F & \ge & B_l; \nonumber \\ 
{} \nonumber \\
auxiliary & {} & constraints.  \nonumber
\end{eqnarray}
\caption{\label{fig:dualModel}Complete \textit{Dual} Model}
\end{figure}

\subsubsection{Probability Constraints}
Given the dual variables, the balance equations can be restated as
\begin{eqnarray}
P(j)\lambda &=& P(j+1)w_{j+1}\mu, \;\;\; \forall j \in \{0, 1, \dots, S-1\}.	\label{eqn:dualBalanceEquations}
\end{eqnarray}		
This formulation of the balance equations avoids the inefficient if-then constraints. The rest of the 
restrictions on the 
probability variables are stated in terms of the $k_i$ variables, as in the \textit{If-Then} model. 
%
In particular, the constraints $\sum_{j=0}^{S} P(j) = 1$ and $((k_0 > j) \le (P(j) = 0))$ $\forall j \in \{0, \dots, S-N-1\}$ 
are present in this model.
\subsubsection{Channelling Constraints}
In order to use redundant variables, a set of channelling constraints has to be added to the model to ensure that an assignment of values to one set of variables will lead to a unique assignment of variables in the other set. The following channelling constraints are included:
\begin{eqnarray}
w_j < w_{j+1} &\leftrightarrow& k_{w_{j}} = j \qquad \qquad \qquad \forall j \in \{0, 1, \dots, S-1\}, \label{eqn:dualWjExpression1} \\
%
w_j = w_{j+1} &\leftrightarrow& k_{w_{j}} \neq j \qquad \qquad \qquad \forall j \in \{0, 1, \dots, S-1\}, \label{eqn:dualWjExpression2} \\ 
%
w_j = i &\leftrightarrow& k_{i-1}+1 \le j \le k_i \qquad \forall j \in \{0, 1, \dots, S\}, \;\; \forall i \in \{1, \dots, N\}. \label{eqn:dualWjExpression3} 
\end{eqnarray}
Constraints (\ref{eqn:dualWjExpression1}) and (\ref{eqn:dualWjExpression2}) are redundant given the constraint $w_j \le w_{j+1}$. However, such redundancy can often lead to increased propagation \cite{Hnich04}. One direction for future work is examining the effect that removing one of these constraints may have on the performance of the program.
\subsubsection{Expected Number of Workers Constraint}
The expression for the expected number of workers in the front room is $F = \sum_{j = 0}^{S} w_jP(j)$. 
%
%
\subsubsection{Expected Number of Customers Constraint}
The constraint used to express $L$ is identical to the one used in the \textit{If-Then} model: $L = \sum_{j=0}^{S} jP(j)$. 
%
This equation is valid because all $P(j)$ for $j < k_0$ are constrained to be 0.
%
%
\subsubsection{Auxiliary Constraints}
The $auxiliary \; constraints$ present in this model are exactly the same as
in the \textit{If-Then} model. The requirement that $P(k_0)\times
\sum_{j={k_0}}^{k_N}\beta_j = 1$ is also stated as a set of if-then constraints $(k_0 = j) \rightarrow P(j)\sum_{i=0}^{N} \beta Sum(k_i) = 1$ for all $j \in \{0,1, \dots, S\}$.

%
%
%

%
%
%
%
%
\begin{table}
\begin{center}
\begin{tabular}{|l|c|c|c|c|} \hline
Statistic/Model & \textit{If-Then} & \textit{PSums} & \textit{Dual} \\ \hline \hline
\# of decision variables & $N+1$ & $N+1$ & $N+S+2$ \\ \hline
\# of probability variables & $S+1$ & $2N+1$ & $S+1$ \\ \hline
\# of probability constraints & $N(S-1)+2S+2$ & $2N+1$ & $3S-N+2$ \\ \hline
\# of constraints for $F$ & $2N(S+1)+1$ & 1 & 1 \\ \hline
\# of constraints for $L$ & 1 & $N+1$ & 1 \\ \hline
\# of if-then constraints & $3NS+2N+S + 1$ & 0 & $S+1$ \\ \hline
total \# of constraints & $3NS + 4N + 2S + 6$ & $6N + 5$ & $NS + 3N + 6S + 8$ \\ \hline
\end{tabular}
\caption{\label{tab:modelSummary}Summary of the main characteristics of the three proposed CP models.}
\end{center}
\end{table}

\subsection{Summary\label{sec:modelSummary1}}
Table \ref{tab:modelSummary} presents a summary of the number of variables
and constraints in each of the 
three proposed models. 
It can be seen that the \textit{PSums} model has a smaller number of probability variables and
constraints but a slightly larger number of constraints for representing $L$ than the other two models, and no if-then constraints. The \textit{Dual} model has a larger number of decision variables than the \textit{If-Then} and \textit{PSums} models. This does not imply that the search space is bigger in this model because the two sets of variables are linked by channelling constraints and so are assigned values via propagation. The \textit{Dual} allows the simplest representations of $F$ and $L$, each requiring only one constraint. The number of probability constraints in the \textit{Dual} is smaller than or equal to the number of such constraints in the \textit{If-Then} model and greater than in the \textit{PSums} model. However, the actual representation of these constraints is the most straightforward in the \textit{Dual} since it neither requires if-then constraints nor closed-form expressions. 

It is hard to determine, simply by looking at Table \ref{tab:modelSummary}, which of the models will be more efficient since, in CP, a larger number of constraints and/or variables
may actually lead to more propagation and to a more effective model \cite{Smith06}. However, it is known that if-then constraints
do not propagate well, and, since the difference in the number of these constraints between the \textit{PSums} model
and both the \textit{If-Then} model and the \textit{Dual} model is quite significant, one may expect the \textit{PSums}
model to have some advantage over the other two models.

In fact, preliminary experiments with all three models showed poor performance (see Table \ref{tab:summ_res1} in Section \ref{sec:expResults}). 
Due to the complexity of constraints relating
the decision variables to variables representing probabilities, there was little constraint propagation, and, essentially, search was required to explore the entire branch-and-bound tree.
As a consequence, in the following two sections we examine dominance rules \cite{Beck04f,Smith05a} 
and shaving \cite{Caseau96,Martin96}, two stronger inference forms used in CP. 
In Section \ref{sec:discussion}, we investigate why the models without dominance rules and shaving need to search the whole tree in order to prove optimality, and also discuss differences in the performance of the models based on our experimental results.

%
%
%
%
%
\section{Dominance Rules\label{sec:dominanceRules}}
A dominance rule is a constraint that forbids assignments of values to variables which are known to be sub-optimal \cite{Beck04f,Smith05a}. 
For problem $P_1$, a dominance rule states that, given a feasible solution, $K$, all further solutions have to have at least one switching point assigned a lower value than the value assigned to it in $K$. In other words, given two solutions $K$ and $K'$, if the $W_q$ value resulting from policy $K'$ is smaller than or equal to the $W_q$ value resulting from $K$, then there have to exist switching points $k'_i$ and $k_i$ in $K'$ and $K$, respectively, satisfying the condition $k'_i < k_i$. The following theorem states the dominance rule more formally.

\begin{theorem}[Dominance Rule] \label{thm:dominanceRuleTheorem}
Let $K = (k_0, k_1, \dots, k_N)$ and $K' = (k'_0, k'_1, \dots, k'_N)$ be two policies such that $k_0 = k'_0 = 0$, $k_1 = k'_1 = 1, \dots, k_{J-1} = k'_{J-1} = {J-1}$ and $k_J \neq k'_J$ (i.e. at least one of $k_J$, $k'_J$ is strictly greater than $J$) for some $J \in \{0, 1, \dots, N-1\}$.
Let $W_q(K)$ and $W_q(K')$ denote the expected waiting times resulting from the two policies $K$ and $K'$, respectively. If $W_q(K') < W_q(K)$, then there exists $i \in \{J, J+1, J+2, \dots , N-1\}$ for which $k'_i < k_i$. 
\end{theorem}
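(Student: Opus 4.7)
The approach is to prove the contrapositive: if $k'_i \geq k_i$ for every $i \in \{J, J+1, \ldots, N-1\}$, then $W_q(K') \geq W_q(K)$. The natural tool is Theorem~\ref{thm:thm1}, which (read in reverse) says that incrementing a single switching point by $1$ while keeping the others fixed weakly \emph{increases} $W_q$, provided the result is a valid switching policy and the gap condition of Theorem~\ref{thm:thm1} is met after the increment.

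The plan is to morph $K$ into $K'$ through a chain of such unit increments, one switching point at a time, processed right-to-left. For $m = N-1, N-2, \ldots, J-1$, let $K^{(m)}$ denote the policy that agrees with $K$ on positions $0, \ldots, m$ and with $K'$ on positions $m+1, \ldots, N$. Then $K^{(N-1)} = K$ (because $k_N = k'_N = S$) and $K^{(J-1)} = K'$ (because $k_i = k'_i = i$ for $i < J$). Passing from $K^{(m)}$ to $K^{(m-1)}$ changes only the $m$-th switching point, moving its value from $k_m$ up to $k'_m$ one unit at a time; each such unit step is an instance of the reverse of Theorem~\ref{thm:thm1}.

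The main obstacle will be verifying that every intermediate policy is valid and that the premise $k_m - k_{m-1} \geq 2$ of Theorem~\ref{thm:thm1} is satisfied at each increment. Processing right-to-left keeps this straightforward: when I am about to bump the $m$-th switching point from some $v \in [k_m, k'_m - 1]$ up to $v+1$, position $m-1$ still holds the $K$-value $k_{m-1}$, while position $m+1$ already holds the $K'$-value $k'_{m+1}$. Validity reduces to $k_{m-1} < v + 1 < k'_{m+1}$, which follows from $v \geq k_m \geq k_{m-1} + 1$ and $v + 1 \leq k'_m \leq k'_{m+1} - 1$. The Theorem~\ref{thm:thm1} premise $(v+1) - k_{m-1} \geq 2$ rearranges to $v \geq k_{m-1} + 1$, which holds for the same reason. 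Applying Theorem~\ref{thm:thm1} at each increment yields $W_q(K^{(m-1)}) \geq W_q(K^{(m)})$; chaining these inequalities from $m = N-1$ down to $m = J$ gives $W_q(K') = W_q(K^{(J-1)}) \geq W_q(K^{(N-1)}) = W_q(K)$, which is the contrapositive. The only boundary case needing a brief remark is $m = 0$, where Theorem~\ref{thm:thm1}'s gap condition is interpreted (as in the type-1 eligibility definition) simply as $k_0 \geq 1$ after the increment, and this is again automatic from $v \geq k_0 \geq 0$.
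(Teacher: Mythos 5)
Your proof is correct and follows essentially the same route as the paper: prove the contrapositive and transform $K$ into $K'$ by repeated application of Theorem~\ref{thm:thm1}, each unit increment of a single switching point weakly increasing $W_q$. Your version is in fact more careful than the paper's, which asserts that "applying Theorem \ref{thm:thm1} several times" generalizes the single-component case without exhibiting the order of increments or checking that each intermediate policy is valid and satisfies the gap condition --- details your right-to-left chain of policies $K^{(m)}$ supplies explicitly.
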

\begin{proof} [By Contraposition]
We prove the contrapositive of the statement in Theorem \ref{thm:dominanceRuleTheorem}: we assume that there does not exist $i \in \{J, J+1, \dots, N-1\}$ such that $k'_i < k_i$ and show that, given this assumption, $W_q(K')$ has to be greater than or equal to $W_q(K)$.

Assume no $i \in \{J, J+1, \dots, N-1\}$ exists for which $k'_i < k_i$. Then one of the following is true: \\
(a)	$k_n = k'_n$ for all $n \in \{J, J+1, \dots, N-1\}$, or \\
(b)	there exists at least one $j \in \{J, J+1, \dots, N-1\}$ such that $k'_j > k_j$, and the values of the rest of the switching points are the same in the two policies.

Case (a) implies that $K'$ and $K$ are the same policy, and so $W_q(K) = W_q(K')$.

To prove (b), suppose there exists exactly one $j \in \{J, J+1, J+2, \dots, N-1\}$ such that $k'_j > k_j$, and $k_n = k'_n$ for all $n \in \{J, \dots, N-1 \}\setminus\{j\}$. Then $K$ and $K'$ are different in the value of exactly one switching point. Consequently, by Theorem \ref{thm:thm1}, $W_q(K') \ge W_q(K)$. Similarly, by applying Theorem \ref{thm:thm1} several times, the result generalizes to cases when there exists more than one $j$ such that $k'_j > k_j$. 

Therefore, if no $i \in \{J, J+1, \dots, N-1\}$ exists for which $k'_i < k_i$, it follows that $W_q(K') \ge W_q(K)$.
In other words, if $W_q(K') < W_q(K)$, then there exists $i \in \{J, J+1, \dots, N-1\}$ for which $k'_i < k_i$.
\end{proof}		

Theorem \ref{thm:dominanceRuleTheorem} implies that, given a feasible policy 
$(0, 1, \dots, k^*_i, k^*_{i+1}, \dots, k^*_{N-1}, S)$ where all switching points with index 
$i$ or greater are assigned values that are strictly greater than their lower bounds, we know that a solution with smaller $W_q$ has to 
satisfy the constraint $((k_i < k^*_i)\; \vee\; (k_{i+1} < k^*_{i+1})\; \vee\; \dots\; \vee\; (k_{N-1} < k^*_{N-1}))$. 
Therefore, in order to implement the dominance rule, we add such a constraint during search every time a feasible policy is found, which should lead to a reduction in the size of the search space. Section \ref{sec:expResults} presents experimental results regarding the usefulness of this technique.

\section{\label{sec:shaving}Shaving} 
%
%

Shaving is an inference method that temporarily adds constraints to the
model, performs propagation, and then soundly prunes variable domains based
on the resulting state of the problem \cite{Demassey05,vanDongen06}.  For
example, a simple shaving procedure may be based on the assignment of a
value $a$ to some variable $x$. If propagation following the assignment
results in a domain wipe-out for some variable, the assignment is
inconsistent, and the value $a$ can be removed from the domain of $x$
\cite{Demassey05,vanDongen06}. In a more general case, both the temporary
constraint and the inferences made based on it can be more complex. Shaving
has been particularly useful in the job-shop scheduling domain, where it is
used to reduce the domains of start and end times of operations
\cite{Caseau96,Martin96}. For such problems, shaving is used either as a
domain reduction technique before search, or is incorporated into
branch-and-bound search so that variable domains are shaved after each
decision \cite{Caseau96}. 
 
In a shaving procedure for our problem, we temporarily assign a particular value to a switching point variable, while the rest of the variables are assigned either their maximum or their minimum possible values. Depending on whether the resulting policies are feasible or infeasible, new bounds for the switching point variables may be derived. 

The instance $N = 3$, $S = 6$, $\lambda = 15$, $\mu = 3$, $B_l = 0.32$
is used below for illustration purposes. 
Policy $\hat {K}$, which always yields the smallest possible $W_q$, for this
instance is $(k_0, k_1, k_2, k_3) = (0, 1, 2, 6)$ and policy $\hat {\hat
{K}}$, which always yields the greatest possible $W_q$, is $(3, 4, 5,
6)$. Thus, the initial domains of the switching points are $[0..3]$,
$[1..4]$, $[2..5]$ and $[6]$ for $k_0$, $k_1$, $k_2$ and $k_3$,
respectively. At any step, shaving may be able to reduce the domains of one
or more of these variables.
\subsection{$B_l$-based Shaving Procedure}
%
%
The initial shaving procedure consists of two cases in which either the
upper or the lower bounds of variables may be modified. In the first case,
the constraint $k_i = \min(k_i)$, where $\min(k_i)$ is the smallest value
from the domain of $k_i$, is temporarily added to the problem for some
particular value of $i$ between $0$ and $N$. All other switching points are
assigned their maximum possible values using the function
\textit{gMax}. Given an array of variables, the function \textit{gMax}
assigns the maximum possible values to all of the variables that do not yet
have a value, returning true if the resulting assignment is feasible, and
false otherwise. The maximum possible values are not necessarily the upper
bound values in the domains of the corresponding variables, rather they are
the highest values in these domains that respect the condition that $k_n <
k_{n+1}$, $\forall n \in \{0, ..., N-1\}$. In the example, if $k_1$ is
assigned the value $1$, while the rest of the variables are unbound,
\textit{gMax} would result in policy $(0, 1, 5, 6)$, which has a feasible
$B$ value of $0.508992$, and thus true would be returned.

Recall that an assignment is infeasible when it yields a $B$ value which is smaller than $B_l$. When the policy resulting from the addition of $k_i = \min(k_i)$ and the use of \textit{gMax} is infeasible, and $\min(k_i)+1 \le \max(k_i)$, the constraint $k_i > \min(k_i)$ can be added to the problem: if all variables except $k_i$ are set to their maximum values, and the problem is infeasible, then in any feasible policy $k_i$ must be greater than $\min(k_i)$. Such reasoning is valid since Theorem \ref{thm:thm1} states that increasing the value of a switching point will increase $B$. Note that if the solution is feasible, it should be recorded as the best-so-far solution if its $W_q$ value is smaller than the $W_q$ value of the previous best policy. For easier reference, this part of the shaving procedure will be referred to as the \textit{gMax} case.

In the second case, the constraint $k_i = \max(k_i)$ is added to the problem for some $i$ between $0$ and $N$, where $\max(k_i)$ is the maximum value from the domain of $k_i$. The rest of the variables are assigned the minimum values from their domains using the function \textit{gMin}. These assignments are made in a way that respects the constraints $k_n < k_{n+1}$,
$\forall n \in \{0, ..., N-1\}$. If the resulting policy is feasible, then the constraint $k_i < \max(k_i)$ can be permanently added to the problem, assuming $\max(k_i)-1 \ge \min(k_i)$. Since all variables except $k_i$ are at their minimum values already, and $k_i$ is at its maximum, it must be true, again by Theorem \ref{thm:thm1}, that in any better solution the value of $k_i$ has to be smaller than $\max(k_i)$. This case will be further referred to as the \textit{gMin} case.

In both cases, if the inferred constraint violates the current upper or
lower bound of a $k_i$, then the best policy found up to that point is
optimal. Whenever the domain of a switching point is modified as a result of
inferences made during the \textit{gMax} or \textit{gMin} case, all of the
switching points need to be re-considered. If the
domain of one variable is reduced during a particular shaving iteration,
some of the temporary constraints added in the next round of shaving will be
different from the ones used previously, and, consequently, new inferences
may be possible. Thus, the shaving procedure terminates when optimality is
proved or when no more inferences can be made.

Consider the example mentioned above. Suppose the constraint $k_0 = 0$ is added to the problem, and all the rest of the variables are assigned their maximum possible values (using \textit{gMax}). The resulting policy is $(k_0, k_1, k_2, k_3) = (0, 4, 5, 6)$. This policy yields a $B$ value of 0.63171, which implies that this policy is feasible because $0.63171 > 0.32 = B_l$, and so no domain reductions can be inferred. The constraint $k_0 = 0$ is then removed. Since the domain of $k_0$ has not been modified, the procedure considers the next variable. Thus, the constraint $k_1 = 1$ is added, and all the other variables are again set to their maximum values. The resulting policy is $(0, 1, 5, 6)$, which is also feasible since its $B$ value is $0.508992$. The constraint $k_1 = 1$ is then removed, and $k_2 = 2$ is added. When all the other variables are set to their maximum values, the resulting policy is $(0, 1, 2, 6)$. This policy yields a $B$ value of 0.1116577, which is smaller than $B_l$. Thus, this policy is infeasible, and the constraint $k_2 > 2$ is added to the problem. This changes the domain of $k_2$ to $[3..5]$. Whenever the domain of a variable is reduced, the next shaving step considers the same switching point, and so the next constraint to be added is $k_2 = 3$.

Now, consider the \textit{gMin} case and assume that all variables have their full initial domains. Suppose the constraint $k_0 = 3$ is added to the problem. All the rest of the variables are assigned their smallest possible values consistent with $k_0 = 3$. Thus, the policy $(3, 4, 5, 6)$ is considered. This policy has a $B$ value of $0.648305$ and is feasible (it is in fact ${\hat {\hat K} }$, so if it were infeasible, the problem would have been infeasible). The value of $k_0$ in any better solution has to be smaller than $3$, and so the domains of the variables become $[0..2]$, $[1..4]$, $[2..5]$, and $[6]$. The constraint $k_0 = 3$ is removed, and, since the domain of $k_0$ has been modified, the constraint $k_0 = 2$ is added next. The policy that is considered now is $(2, 3, 4, 6)$. This policy is also feasible, and so the domains become $[0..1]$, $[1..4]$, $[2..5]$, $[6]$. The temporary constraint $k_0 = 2$ is removed, and the next one added is $k_0 = 1$. The corresponding policy assigned by \textit{gMin} is infeasible, and no domain reductions are made. Since the addition of $k_0 = 1$ did not result in any domain reductions, there is no need to reconsider the variable $k_0$ until after all other switching points have been looked at. Consequently, the next temporary constraint to be added is $k_1 = 4$.

In the complete $B_l$-based shaving procedure, we can start either with the \textit{gMin} or the \textit{gMax} case. Since policies considered in the \textit{gMin} case will generally have smaller waiting time than ones considered in the \textit{gMax} case, it may be beneficial to start with the \textit{gMin} case. This is the approach we take. 

Our complete $B_l$-based shaving algorithm is presented in Figure \ref{fig:blShaving1}. It is assumed in all of the algorithms presented that the functions ``add($constraint$)'' and ``remove($constraint$)'' add and remove $constraint$ to and from the model, respectively.

\if 0
\begin{figure}
{\bf Algorithm 1: {\textit{$B_l$-based Shaving}}}\\
{\bf Input:} $S$, $N$, $\mu$, $\lambda$, $B_l$ (problem instance parameters); $bestW_q$ (objective function value of the best solution found so far); $bestSolution$ (best solution found so far) \\
{\bf Output:} $bestW_q$ and $bestSolution$ with (possibly) modified domains of the variables $k_i$, or $bestW_q$ and $bestSolution$ with proof of their optimality
\begin{tabbing}
\hspace{0.15in} \= \hspace{0.15in} \= \hspace{0.15in} \= \hspace{0.15in} \= \hspace{0.15in} \= \hspace{0.75in} \\
$changed = true$ \\
{\bf while} ($changed$) \\
\> $changed = false$ \\
\> {\bf for} all $i$ from $0$ to $N-1$ \\
\>\> $successfulShave = true$ \\
\>\> {\bf while} ($successfulShave$) \\
\>\>\> $successfulShave = false$ \\
\>\>\> add( $k_i = \max(Domain(k_i))$ ) \\
\>\>\> {\bf if} (\textit{gMin}) \\
\>\>\>\> {\bf if} ($W_q < bestW_q$ ) \\
\>\>\>\>\> $bestSolution = currentSolution$; $bestW_q = W_q$ \\
\>\>\>\> {\bf if} ( $\max(Domain(k_i))-1 \ge \min(Domain(k_i))$ ) \\
\>\>\>\>\> add( $k_i < \max(Domain(k_i))$ ) \\
\>\>\>\>\> $successfulShave = true$; $changed = true$ \\
\>\>\>\> {\bf else} \\
\>\>\>\>\> return $bestSolution$, $bestW_q$; stop, optimality has been proved \\
\>\>\> remove( $k_i = \max(Domain(k_i))$ ) \\
\>\> $successfulShave = true$ \\
\>\> {\bf while} ($successfulShave$) \\
\>\>\> $successfulShave = false$ \\
\>\>\> add( $k_i = \min(Domain(k_i))$  ) \\
\>\>\> {\bf if} (\textit{gMax}) \\
\>\>\>\> {\bf if} ($W_q < bestW_q$) \\
\>\>\>\>\> $bestSolution = currentSolution$; $bestW_q = W_q$ \\
\>\>\> {\bf else} \\
\>\>\>\> {\bf if} ( $\min(Domain(k_i))+1 \le \max(Domain(k_i))$ ) \\
\>\>\>\>\> add( $k_i > \min(Domain(k_i))$ ) \\
\>\>\>\>\> $successfulShave = true$; $changed = true$ \\
\>\>\>\> {\bf else} \\
\>\>\>\>\> return $bestSolution$, $bestW_q$; stop, optimality has been proved \\
\>\>\> remove( $k_i = \min(Domain(k_i))$ ) \\
\end{tabbing}
\caption{\label{fig:blShaving1}$B_l$-based shaving algorithm}
\end{figure}
\fi

\begin{figure}
{\bf Algorithm 1: {\textit{$B_l$-based Shaving}}}\\
{\bf Input:} $S$, $N$, $\mu$, $\lambda$, $B_l$ (problem instance parameters); $bestSolution$ (best solution found so far) \\
{\bf Output:} $bestSolution$ with (possibly) modified domains of the variables $k_i$, or $bestSolution$ with proof of its optimality
\begin{tabbing}
\hspace{0.15in} \= \hspace{0.15in} \= \hspace{0.15in} \= \hspace{0.15in} \= \hspace{0.15in} \= \hspace{0.75in} \\
{\bf while} (there are domain changes) \\
\> {\bf for} all $i$ from $0$ to $N-1$ \\
\>\> {} \\
\>\> {\bf while} (shaving successful for $Domain(k_i)$) \\
\>\>\> add( $k_i = \max(Domain(k_i))$ ) \\
\>\>\> {\bf if} (\textit{gMin}) \\
\>\>\>\> {\bf if} (new best solution found) \\
\>\>\>\>\> $bestSolution = currentSolution$; \\ 
\>\>\>\> {\bf if} ( $\max(Domain(k_i))-1 \ge \min(Domain(k_i))$ ) \\
\>\>\>\>\> add( $k_i < \max(Domain(k_i))$ ) \\
\>\>\>\> {\bf else} \\
\>\>\>\>\> return $bestSolution$; stop, optimality has been proved \\
\>\>\> remove( $k_i = \max(Domain(k_i))$ ) \\
\>\> {} \\
\>\> {\bf while} (shaving successful for $Domain(k_i)$) \\
\>\>\> add( $k_i = \min(Domain(k_i))$  ) \\
\>\>\> {\bf if} (\textit{gMax}) \\
\>\>\>\> {\bf if} (new best solution found) \\
\>\>\>\>\> $bestSolution = currentSolution$; \\ 
\>\>\> {\bf else} \\
\>\>\>\> {\bf if} ( $\min(Domain(k_i))+1 \le \max(Domain(k_i))$ ) \\
\>\>\>\>\> add( $k_i > \min(Domain(k_i))$ ) \\
\>\>\>\> {\bf else} \\
\>\>\>\>\> return $bestSolution$; stop, optimality has been proved \\
\>\>\> remove( $k_i = \min(Domain(k_i))$ ) \\
\end{tabbing}
\caption{\label{fig:blShaving1}$B_l$-based shaving algorithm}
\end{figure}

Upon the completion of this shaving procedure, the constraint $W_q \le
bestW_q$, where $bestW_q$ is the value of the best solution found up to that
point, is added ($W_q \le bestW_q$ rather than $W_q < bestW_q$ is added
because of numerical issues with testing equality of floating point numbers). However, although such a constraint rules out policies with higher $W_q$ as infeasible, it results in almost no propagation of the domains of the decision variables and does little to reduce the size of the search tree. In order to remedy this problem, another shaving procedure, this time based on the constraint $W_q \le bestW_q$ is proposed in the next sub-section. The issue of the lack of propagation of the domains of $k_i$ from the addition of this constraint will be discussed in more detail in Section \ref{sec:lackofpropagation}.
\subsection{$W_q$-based Shaving Procedure}
%
%
%
%
\if 0
\begin{figure}
{\bf Algorithm 2: {\textit{$W_q$-based Shaving}}}\\
{\bf Input:} $S$, $N$, $\mu$, $\lambda$, $B_l$ (problem instance parameters); $bestW_q$ (objective function value of the best solution found so far); $bestSolution$ (best solution found so far) \\
{\bf Output:} $bestW_q$ and $bestSolution$ with (possibly) modified domains of the variables $k_i$, or $bestW_q$ and $bestSolution$ with proof of their optimality
\begin{tabbing}
\hspace{0.15in} \= \hspace{0.15in} \= \hspace{0.15in} \= \hspace{0.15in} \= \hspace{0.15in} \= \hspace{0.75in} \\
$changed = true$ \\
{\bf while} ($changed$) \\
\> $changed = false$ \\ 
\> {\bf for} all $i$ from $0$ to $N-1$ \\
\>\> $successfulShave = true$ \\
\>\> {\bf while} ($successfulShave$) \\
\>\>\> $successfulShave = false$ \\
\>\>\> add( $k_i = \max(Domain(k_i))$ ) \\
\>\>\> {\bf if} (!\textit{gMin}) \\
\>\>\>\> {\bf if} ( $\max(Domain(k_i))-1 \ge \min(Domain(k_i))$ ) \\
\>\>\>\>\> add( $k_i < \max(Domain(k_i))$ ) \\
\>\>\>\>\> $successfulShave = true$ \\
\>\>\>\>\> $changed = true$ \\
\>\>\> remove( $k_i = \max(Domain(k_i))$ ) \\
\end{tabbing}
\caption{\label{fig:wqShaving1}$W_q$-based shaving algorithm}
\end{figure}
\fi

\begin{figure}
{\bf Algorithm 2: {\textit{$W_q$-based Shaving}}}\\
{\bf Input:} $S$, $N$, $\mu$, $\lambda$, $B_l$ (problem instance parameters); $bestSolution$ (best solution found so far) \\
{\bf Output:} $bestSolution$ with (possibly) modified domains of the variables $k_i$, or $bestSolution$ with proof of its optimality
\begin{tabbing}
\hspace{0.15in} \= \hspace{0.15in} \= \hspace{0.15in} \= \hspace{0.15in} \= \hspace{0.15in} \= \hspace{0.75in} \\
{\bf while} (there are domain changes) \\
\> {\bf for} all $i$ from $0$ to $N-1$ \\
\>\> {\bf while} (shaving successful for $Domain(k_i)$) \\
\>\>\> add( $k_i = \max(Domain(k_i))$ ) \\
\>\>\> {\bf if} (!\textit{gMin}) \\
\>\>\>\> {\bf if} ( $\max(Domain(k_i))-1 \ge \min(Domain(k_i))$ ) \\
\>\>\>\>\> add( $k_i < \max(Domain(k_i))$ ) \\
\>\>\>\> {\bf else} \\
\>\>\>\>\> return $bestSolution$; stop, optimality has been proved \\
\>\>\> remove( $k_i = \max(Domain(k_i))$ ) \\
\end{tabbing}
\caption{\label{fig:wqShaving1}$W_q$-based shaving algorithm}
\end{figure}

The $W_q$-based shaving procedure makes inferences based strictly on the
constraint $W_q \le bestW_q$: the constraint $B \ge B_l$ is removed prior to running this procedure in order to eliminate the possibility of incorrect inferences. As in $B_l$-based shaving, a constraint of the form $k_i = \max(k_i)$, where $\max(k_i)$ is the maximum value in the domain of $k_i$, is added temporarily, and the function \textit{gMin} is used to assign values to the rest of the variables. Because the $B_l$ constraint has been removed, the only reason for the infeasibility of the policy is that it has a $W_q$ value greater than the best $W_q$ that has been encountered so far. Since all switching points except $k_i$ are assigned their smallest possible values, infeasibility implies that in any solution with a smaller expected waiting time, the value of $k_i$ has to be strictly smaller than $\max(k_i)$. 
This shaving procedure is stated in Figure \ref{fig:wqShaving1}.
%
%
%
\subsection{Combination of Shaving Procedures}
$W_q$-based and $B_l$-based shaving will result in different domain
reductions since they are based on two different constraints. Moreover,
using the two together may cause more domain modifications than when either
is used by itself. 
Therefore, it makes sense to run the $B_l$-based and $W_q$-based shaving procedures alternately (with $W_q$ and $B_l$ constraints added and removed appropriately) until no more domain pruning is possible. Such a combination of the two shaving procedures will be referred to as \textit{AlternatingShaving}.

The \textit{AlternatingShaving} procedure can be effectively combined with
search in the following manner. \textit{AlternatingShaving} can be run
initially, until no further domain modifications are possible. Search can
then be performed until a better solution is found, at which point
\textit{AlternatingShaving} can be applied again. Subsequently, search and
shaving can alternate until one of them proves optimality of the best
solution found. Such an approach may be successful because if search finds a
new best solution, a new constraint on $W_q$ will be added, and so
$W_q$-based shaving may be able to reduce the upper bounds of the switching
point variables. 
This way of combining search and shaving will be further referred to as \textit{AlternatingSearchAndShaving}.

Other variations of shaving are also possible. In particular, both $B_l$-based and $W_q$-based shaving procedures can be extended to make inferences about values of two switching points. 
For example, one can assign maximum values to a pair of switching point variables, while assigning minimum values to the rest. 
If the resulting policy is feasible, then a constraint stating that at least one variable from this pair has to be assigned 
a smaller value can be added to the problem. 
Preliminary experiments indicated that shaving procedures based 
on two switching points do not, in general, result in more effective models. Such procedures do not 
explicitly reduce the domains of the switching point variables but rather add a set of constraints to the model 
which do not appear, in practice, to significantly reduce the size of the search space. 
One possible direction for future work may be to further investigate these variations of shaving.
\section{Experimental Results\label{sec:expResults}}
Several sets of experiments\footnote{Numerical values in some of the results are slightly different from the ones presented in our previous work \cite{Terekhov07a} 
due to some minor errors discovered after the publication of that paper. 
The main conclusions and analysis of the previous work 
remain valid, however.} were performed in order to evaluate the efficiency
of the proposed models and the effectiveness of dominance rules and shaving
procedures, as well as to compare the performance of the best CP model with
the performance of heuristic $P_1$. 
All constraint programming models were implemented in ILOG Solver 6.2, while the heuristic of Berman et al. was implemented using C++. 

We note that numerical results obtained in the experiments are sensitive to the level of precision that is set. In all constraint programming models, we set the default ILOG Solver precision to 0.000001. 
%
Doing so implies that all floating point variables in our model are considered bound when the maximum ($max$) and minimum ($min$) values of their intervals are such that $((max - min)/(max\{1, |min|\}) \le 0.000001$ \cite{Solver62}. %
In order to propagate constraints involving floating point variables, such
as Equation (\ref{eqn:switchProb}), ILOG Solver uses standard interval
arithmetic and outward rounding.\footnote{Jean-Fran\c cois Puget - personal communication.} 
 
\subsection{Problem Instances}

The information gained from policies $\hat K$ and ${\hat {\hat K}}$ is
explicitly used in the implementation of all three models. 
If $\hat {\hat K}$ is infeasible, then the program stops as there is no feasible solution for that instance. Otherwise, if $\hat K$ is feasible, then it is optimal. The two cases when $\hat K$ is optimal or $\hat {\hat K}$ is infeasible are therefore trivial and are solved easily both by the CP models and by Berman et al.'s heuristic. Although instances in which $\hat {\hat K}$ is optimal are very hard to solve without shaving, using the elementary $B_l$-based shaving procedure will always result in a (usually fast) proof of the optimality of this policy. This case is also trivial for Berman et al.'s heuristic. Consequently, the experimental results presented here are based only on the instances for which the optimal solution is between ${\hat K}$ and ${\hat {\hat K}}$.

Preliminary experiments indicated that the value of $S$ has a significant
impact on the efficiency of the programs since higher values of $S$ result
in larger domains for the $k_i$ variables for all models and a higher number
of $w_j$ variables for the \textit{Dual} model. As indicated in Table
\ref{tab:modelSummary} in Section \ref{sec:modelSummary1}, $S$ also has a
big impact on the number of constraints in the \textit{If-Then} and
\textit{Dual} models. Therefore, we consider instances for each value of $S$ from the set $\{10, 20, \dots, 100\}$ in order to gain an accurate understanding of the performance of the model and the heuristic. We note that for most instances with $S$ greater than 100, neither our method
nor Berman et al.'s heuristic $P_1$ may be used due to numerical
instability.

Thirty instances were generated for each $S$ in such a way as to ensure that
the instance is feasible and that the optimal policy is neither ${\hat{\hat
K}}$ nor ${\hat K}$. We created random combinations of parameter values for
each $S$ and chose the instances for which policy ${\hat{\hat K}}$ was found
to be feasible, but not optimal, and ${\hat K}$ was determined to be
infeasible. In order to check that ${\hat{\hat K}}$ is not optimal, it is
sufficient to find a feasible solution that has one switching point assigned
a value lower than its upper bound. When generating combinations of
parameters, the values of $N$ were chosen with uniform probability from the
set \{2, \dots, 38\}, the values of $\lambda$ from \{5,\dots, 99\}, the
values of $\mu$ from \{1, \dots, 49\} and the values of $B_l$ from \{1,
\dots, 4\}. 
There appears to be no easy way of determining whether a given instance will have ${\hat{\hat K}}$ or ${\hat K}$ as the optimal solution based only on the parameter values. Moreover, preliminary experiments indicated that problem difficulty depends on a combination of problem parameters (especially $S$, $N$ and $B_l$) rather than on one parameter only.  

A 10-minute time limit on the overall run-time of the program was enforced in the experiments. All experiments were performed on a Dual Core AMD 270 CPU with 1 MB cache, 4 GB of main memory, running Red Hat Enterprise Linux 4.

\subsection{Performance Measures}
In order to perform comparisons among the CP models, and between the CP models and Berman et al.'s heuristic, we look at mean run-times, the number of instances in which the optimal solution was found, the number of instances in which optimality was proved, the number of instances in which the best-known solution was found and the mean relative error (MRE). MRE is a measure of solution quality that allows one to observe
how quickly a particular algorithm is able to find a good solution. MRE is
defined as
\begin{eqnarray}
MRE(a,M) &=&  \frac{1}{\mid M \mid} 
\sum_{m \in M}^{} \frac{c(a,m)-c^*(m)}{c^*(m)} \label{eqn:MRE} 
\end{eqnarray}
where $a$ is the algorithm that is used to solve the problem, $M$
is the set of problem instances on which the algorithm is being tested,
$c(a, m)$ is the cost of a solution found for instance $m$ by algorithm $a$,
and $c^*(m)$ is the best-known solution for instance $m$. As we generated
the instances, $c^*(m)$ is the best solution we found during our these
experiments. 
\subsection{Comparison of Constraint Programming Models and Techniques}
Each CP model was tested with and without shaving and dominance rules. A
total of 30 CP-based methods were therefore evaluated. A model with $B_l$-based shaving is a model which runs the
$B_l$-based shaving procedure until no more domain changes are possible,
adds a constraint on the value of $W_q$ based on the best solution found
during the shaving procedure and runs search for the rest of the
time. Similarly, models with $W_q$-based shaving and
\textit{AlternatingShaving} are models which run the $W_q$-based shaving
procedure and the \textit{AlternatingShaving} procedure, respectively, until
it is no longer possible to reduce the domains of the switching point
variables, add a constraint requiring $W_q$ to be less than the expected
waiting time of the best solution found by the shaving procedure and use
search for the rest of the time. As described previously,
\textit{AlternatingSearchAndShaving} alternates between search and the
\textit{AlternatingShaving} procedure. In all models, search assigns
switching points in increasing index order. The smallest value in the domain
of each variable is tried first. 

%
%
%
%
\subsubsection{Constraint Programming Models}
Table \ref{tab:summ_res1} presents the number of instances, out of 300, for which the optimal solution was found and proved by each of the 30 proposed CP-based methods. This table indicates that the \textit{PSums} model is the most effective of the three, proving optimality in the largest number of instances regardless of the use of dominance rules and shaving. With \textit{AlternatingSearchAndShaving}, \textit{PSums} proves optimality in the largest number of instances: in $79.3\%$ of all instances, 238 out of 239 instances for which optimality has been proved by any model.

Figure \ref{fig:cpModelsHeuristicMRE} shows how the MRE changes over the
first 50 seconds of run-time for \textit{If-Then}, \textit{PSums} and
\textit{Dual} models with \textit{AlternatingSearchAndShaving}, and for
Berman's heuristic (we comment on the performance of the heuristic in
Section \ref{sec:heuristicPerformance}). \textit{PSums} is, on average, able
to find better solutions than the other two models given the same amount of
run-time.

\begin{table}
\begin{center}
\begin{tabular}{|l|c|c|c|c|c|c|c|c|c|c|}  \hline
& \multicolumn{2}{|c|}{No Shaving} 
& \multicolumn{2}{|c|}{$B_l$-based} 
& \multicolumn{2}{|c|}{$W_q$-based}
& \multicolumn{2}{|c|}{$Alternating$}
& \multicolumn{2}{|c|}{$AlternatingSearch$} \\
& \multicolumn{2}{|c|}{}
& \multicolumn{2}{|c|}{Shaving} 
& \multicolumn{2}{|c|}{Shaving}
& \multicolumn{2}{|c|}{$Shaving$}
& \multicolumn{2}{|c|}{$AndShaving$} \\ \hline 
{} & D & ND & D & ND & D & ND & D & ND & D & ND  \\ \hline \hline
\textit{If-Then} & 105 & 105 & 192 & 191 & 105 & 105 & 219 & 218 & 234 & 234 \\ \hline
\textit{PSums} & 126 & 126 & 202 & 201 & 126 & 126 & 225 & 225 & 238 & 238 \\ \hline
\textit{Dual} & 105 & 105 & 191 & 191 & 105 & 105 & 218 & 218 & 232 & 232 \\ \hline
\end{tabular}
\caption{\label{tab:summ_res1}Number of instances for which the optimal solution is found and its optimality is proved within 10 CPU-minutes out of a total of 300 problem instances (D - with dominance rules, ND - without dominance rules).}
\end{center}
\end{table}

\begin{figure}[t]
\begin{center}
\centerline{\rotatebox{0}{\scalebox{0.9}{\epsfig{file=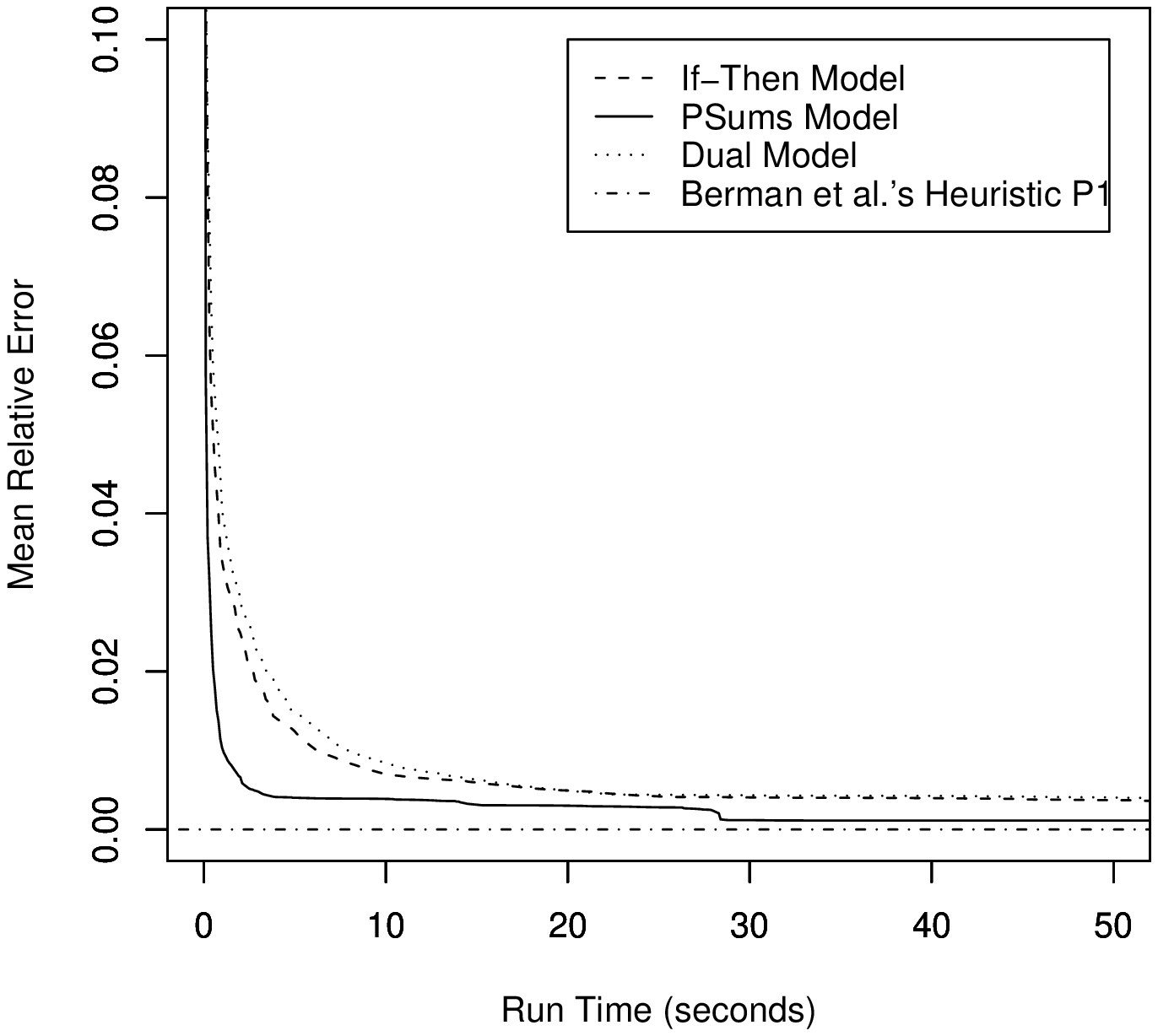}}}}
\end{center}
\caption{\label{fig:cpModelsHeuristicMRE}Comparison of MRE of three CP models with
\textit{AlternatingSearchAndShaving} with Berman's heuristic $P_1$.}
\end{figure}

%
%
%
%
%
%
%
%
%
%
%
%
In Table \ref{tab:summ_res2}, additional statistics regarding the performance of the three models with \textit{AlternatingSearch}\textit{AndShaving} and without dominance rules are presented (we comment on the same statistics for $P_1$ in Section \ref{sec:heuristicPerformance}). In particular, for each model, the number of instances in which it finds the best solution (out of 300), the number of instances in which it finds the optimal solution (out of 239 cases for which optimality has been proved) and the number of times it proves optimality (out of 300) are presented.
It can
be seen that all models find the optimal solution in 
the 239 instances for which it is known. 
However, the \textit{PSums} model 
proves optimality in 4 more instances than the \textit{If-Then} model and in 6 more instances than the \textit{Dual}. \textit{PSums} also finds the 
best-known solution of any algorithm in $97.6\%$ of all the instances considered. 
A more detailed discussion of the differences in the performance of the CP models is presented in Section \ref{sec:cpModelsDiff}.
%
%
%
\subsubsection{Shaving Procedures}
%
%
%
%
%
%
%
From Table \ref{tab:summ_res1}, it can be observed that the CP models without shaving and with the $W_q$-based shaving procedure prove optimality in the fewest number of cases. 
The similarity in performance of models without shaving and with $W_q$-based shaving is not surprising because the $W_q$-based procedure is able to start pruning 
domains only when the value of the best policy found prior to this procedure is quite good. 
When the $W_q$-based procedure is used alone, it only has one solution to base its inferences on, 
namely $\hat {\hat K}$. Since all policies will result in a smaller expected waiting time than $\hat {\hat K}$, this procedure by itself is useless. 

Employing the $B_l$-based shaving procedure substantially improves the performance of all models: without dominance rules, the \textit{If-Then}, the \textit{PSums} and the \textit{Dual} models prove optimality in 86, 75 and 86 more instances, respectively, than the corresponding models without shaving or with only $W_q$-based shaving; with dominance rules, the situation is equivalent.
These results imply that inferences made based on the $B_l$ constraint are effective in reducing the domains of the decision variables.

Models with \textit{AlternatingShaving} and \textit{AlternatingSearchAndShaving} perform even better 
than models employing only the $B_l$-based shaving procedure. The real 
power of $W_q$-based shaving only becomes apparent when it is combined with
$B_l$-based shaving because $B_l$-based shaving often finds a good solution and a good value of $bestW_q$, allowing 
the $W_q$-based procedure to infer more domain reductions. This observation explains 
why \textit{AlternatingSearchAndShaving} performs better than \textit{AlternatingShaving}. 
In particular, in \textit{AlternatingSearchAndShaving}, the $W_q$-based procedure is used both after a new 
best solution is found during shaving and after one is found during search. Therefore, if a higher quality 
solution is found during search, it will be used by the $W_q$-based procedure to further prune the domains of the 
switching point variables. 

In Figure \ref{fig:shavingMRE}, the average run-times for each value of $S$
from 10 to 100 are presented for the four shaving procedures with the
\textit{PSums} model. Since a run-time limit of 600 seconds was used
throughout the experiments, we assumed a run-time of 600 seconds for all
instances for which optimality has not been proved within this
limit. Therefore, the mean run-times reported throughout this paper are
underestimates of the true means. Figure \ref{fig:shavingMRE} shows that,
for each value of $S$, the \textit{AlternatingSearchAndShaving} procedure
gives the best performance. It can also be seen that, as $S$ increases, it
becomes increasingly difficult to prove optimality and average run-times
increase. As stated previously, this is due to larger domains of the
switching point variables. The \textit{AlternatingSearchAndShaving}
procedure, however, is able to significantly reduce the domains of the $k_i$
variables and therefore provides an effective method for instances with
higher values of $S$ as well.

\begin{table}
\begin{center}
\begin{tabular}{|l|c|c|c|} \hline
{} & \# best found (/300) & \# opt. found (/239) & \# opt. proved (/300) \\ \hline \hline
\textit{PSums} & 293 & 239 & 238 \\ \hline
\textit{If-Then} & 282 & 239 & 234 \\ \hline
\textit{Dual} & 279 & 239 & 232 \\ \hline
$P_1$ & 282 & 238 & 0 \\ \hline
\end{tabular}
\caption{\label{tab:summ_res2}Comparison of three CP models (with
\textit{AlternatingSearchAndShaving} and without dominance rules) with Berman's heuristic $P_1$.}
\end{center}
\end{table}

\begin{figure}[t]
\begin{center}
\centerline{\rotatebox{0}{\scalebox{0.7}{\epsfig{file=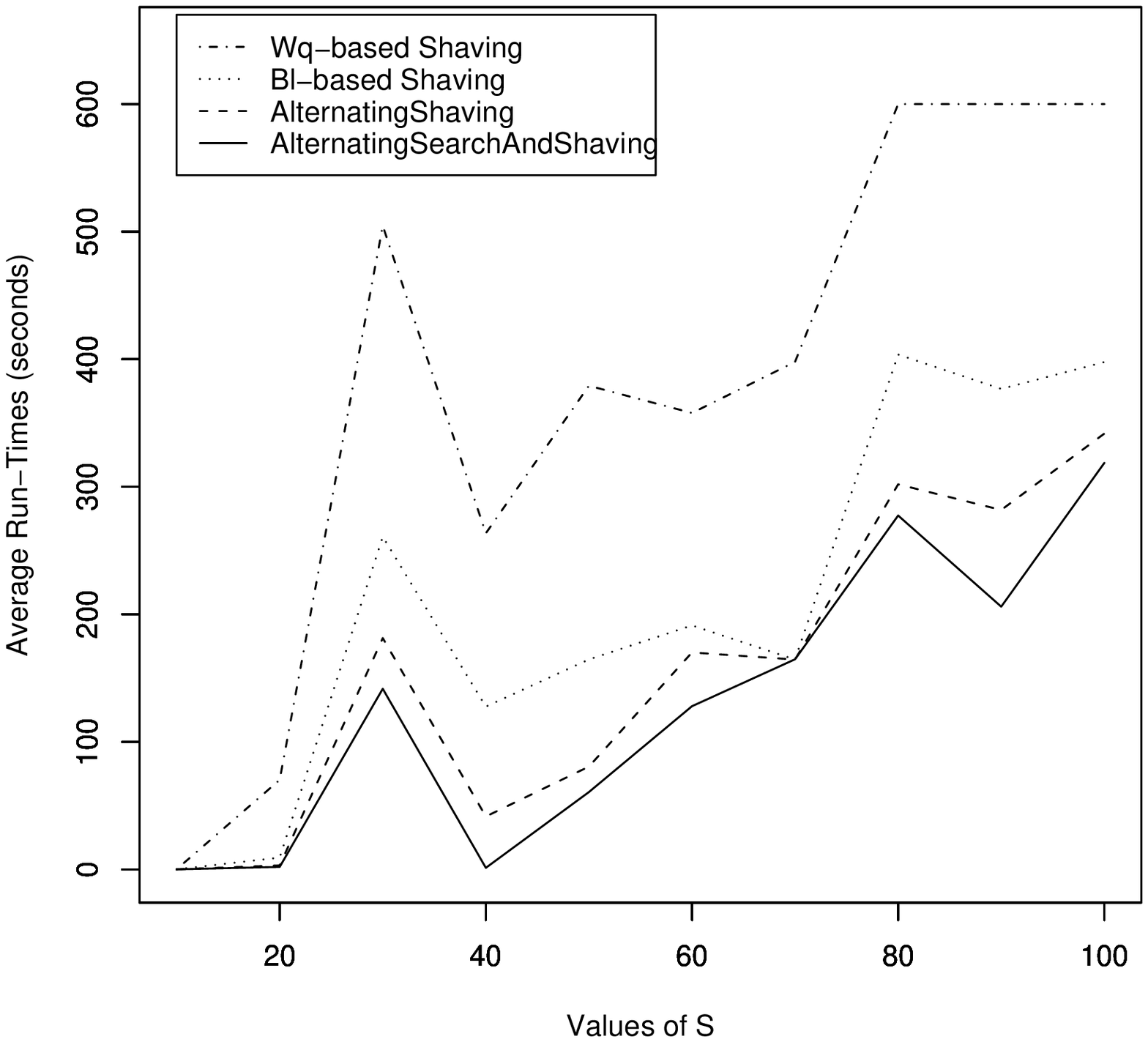}}}}
\end{center}
\caption{\label{fig:shavingMRE}\textit{PSums} model with various shaving techniques: average run-times for each value of $S$. Average run-times for \textit{PSums} without shaving are not shown in this graph since the resulting curve would be indistinguishable from the one for $W_q$-based shaving.}
\end{figure}
\subsubsection{Dominance Rules} 
Table \ref{tab:summ_res1} indicates that there is rarely any difference in
the number of instances solved to optimality between models with and without
dominance rules. No difference at all is visible for any model without
shaving, with $W_q$-based shaving and \textit{AlternatingSearchAndShaving}.

Recall that dominance rules are implemented by the addition of a constraint on the values of the switching point variables after a solution is found. Such a constraint will be more effective when more of the switching point variables are assigned their minimum values in the current solution. Usually, such policies are also the ones which result in a smaller expected waiting time. Similarly, $W_q$-based shaving is useful only when a solution with small expected waiting time is found. This leads to the conjecture that dominance rules may be effective only for the same instances for which the $W_q$-based shaving procedure is effective. This conjecture is supported by the results of Table \ref{tab:summ_res1}. In particular, when the $W_q$-based shaving procedure is used by itself, it makes inferences based only on the policy $\hat{\hat K}$, a solution that is generally of poorest quality for an instance. The method with a single run of $W_q$-based shaving therefore heavily relies on search.  
Since search takes a long time to find a feasible solution of good quality, the effectiveness of dominance rule-based constraints is also not visible within the given time limit.

On the other hand, in \textit{AlternatingSearchAndShaving}, the $W_q$-based
procedure plays a key role because it makes domain reductions based on high
quality solutions produced by $B_l$-based shaving, and later,
search. Dominance rules do not play a role in this procedure since shaving
is used after every new solution is found. However, even if dominance rule constraints were explicitly incorporated in the procedure (i.e. if they were added before each new run of search), they would be redundant since they serve essentially the same purpose as the $W_q$-based shaving procedure.

When shaving is not used, the results are equivalent to those achieved when
$W_q$-based shaving is employed. The explanation for the absence of a
difference between models with and without dominance rules is therefore also
the same. In particular, it takes a long time for a solution to be found
whose quality is such that it allows the dominance rule constraint to
effectively reduce the size of the search tree. 
 
When $B_l$-based shaving and \textit{AlternatingShaving} are used, dominance rules are sometimes helpful. In both cases, this is because after these two shaving procedures, subsequent search usually finds a good solution quickly, and, since $W_q$-based shaving is not used again at that point, the dominance rule constraint that is added can be effective in reducing the size of the search tree. 
\begin{figure}[t]
\begin{center}
\centerline{\rotatebox{0}{\scalebox{0.9}{\epsfig{file=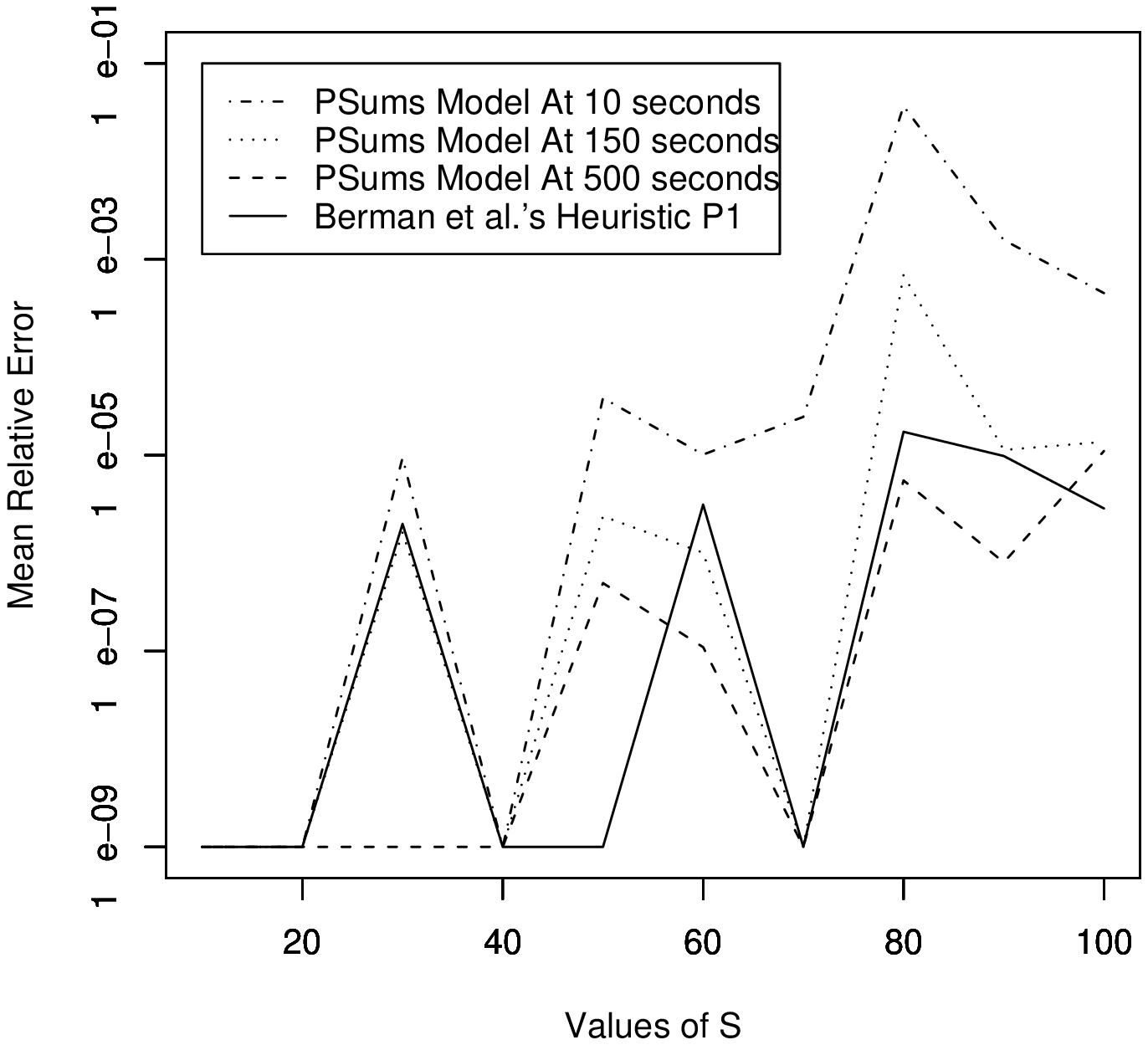}}}}
\end{center}
\caption{\label{fig:MREforeachS}MRE for each value of $S$ for $P_1$ and the $PSums$ model.} 
\end{figure}

Overall, it can be observed that using \textit{AlternatingSearchAndShaving} without dominance rules is more effective than using $B_l$-based shaving or \textit{AlternatingShaving} with dominance rules. Therefore, in further comparisons, the focus is only on models with \newline \textit{AlternatingSearchAndShaving} without dominance rules.
\subsection{Heuristic $P_1$ vs. Best Constraint Programming Approach \label{sec:heuristicPerformance}}
Empirical results regarding the performance of heuristic $P_1$ are not
presented by Berman et al. \citeyear{Berman05a}, and so the ability of $P_1$ to find
good switching policies has not been explicitly evaluated in previous work. We wanted to find out 
how well the heuristic actually performs by comparing it to our CP methods.

In Table \ref{tab:summ_res2}, we present several measures of performance for the three proposed models with \textit{AlternatingSearchAndShaving} and for the heuristic $P_1$.  
The heuristic performs well, finding the best-known solution in only eleven fewer instances than the \textit{PSums} model, in three more instances than the \textit{Dual} model and in the same number of instances as the \textit{If-Then} model. Moreover, the heuristic finds, but, of course, cannot prove, the optimal solution in 238 out of 239 instances for which the optimal is known. The three CP models find the optimal solution in all 239 of these.
The run-time of the heuristic is negligible, whereas the mean run-time of the \textit{PSums} model is approximately 130 seconds (the mean run-times of the other two models are slightly higher: 141 seconds for the \textit{If-Then} model and 149 seconds for the \textit{Dual} model).

Table \ref{tab:summ_res2} also shows that the \textit{PSums} model is able to find the best-known solution in 11 more instances than the heuristic. Closer examination reveals that there are 275 instances in which both the \textit{PSums} model and $P_1$ find the best-known solution, 18 instances in which only \textit{PSums} is able to do so and 7 in which only the heuristic finds the best-known.

From Figure \ref{fig:cpModelsHeuristicMRE}, it can be observed that the heuristic achieves a very small MRE in a negligible amount of time. After about 50 seconds of run-time, the MRE over 300 instances resulting from \textit{PSums} with \textit{AlternatingSearchAndShaving} becomes comparable to that of the heuristic MRE. In Figure \ref{fig:MREforeachS}, the MRE over 30 instances for each value of $S$ is presented for the heuristic and for \textit{PSums} with \textit{AlternatingSearchAndShaving} at 10, 150 and 500 seconds of run-time. After 10 seconds, the performance of \textit{PSums} is comparable to that of the heuristic for values of $S$ smaller than or equal to 40, but the heuristic appears to be quite a bit better for higher values of $S$. At 150 seconds, the performance of \textit{PSums} is comparable to that of the heuristic except at $S$ values of 50 and 80. After 500 seconds, \textit{PSums} has a smaller MRE over the 300 instances and also a lower (or equal) MRE for each value of $S$ except 50 and 100.

Overall, these results indicate that the heuristic performs well--its run-time is negligible, it finds the optimal solution in all but one of the cases for which it is known, and it finds the best solution in $94\%$ of all instances. Moreover, it results in very low MRE. Although \textit{PSums} with \textit{AlternatingSearchAndShaving} is able to achieve slightly higher numbers in most of the performance measures, it is clear that these improvements are small given that the \textit{PSums} run-time is so much higher than the run-time of the heuristic.
%
\section{\textit{PSums}-$P_1$ Hybrid}
\label{sec:hybrid} 
Naturally, it is desirable to create a method that would be able to find a solution of high quality in a very short amount of time, as does Berman's heuristic, and that would also have the same high rate of being able to prove optimality within a reasonable run-time as does \textit{PSums} with \textit{AlternatingSearchAndShaving}. It is therefore worthwhile to experiment with a \textit{PSums}-$P_1$ Hybrid, which starts off by running $P_1$ and then, assuming the instance is feasible, uses the \textit{PSums} model with \textit{AlternatingSearchAndShaving} to find a better solution or prove the optimality of the solution found by $P_1$ (infeasibility of an instance is proved if the heuristic determines that policy ${\hat {\hat K}}$ is infeasible).
%
%
%
%

Since it was shown that heuristic $P_1$ is very fast, running it first
incurs almost no overhead. Throughout the analysis of experimental results,
it was also noted that the performance of the $W_q$-based shaving procedure
depends on the quality of the best solution found before it is used. We have
shown that the heuristic provides solutions of very high quality. Therefore,
the first iteration of the $W_q$-based procedure may be able to significantly prune the domains of switching point variables because of the good-quality solution found by the heuristic. Continuing by alternating the two shaving techniques and search, which has also been shown to be an effective approach, should lead to good results.

\begin{table}
\begin{center}
\begin{tabular}{|l|c|c|c|} \hline
{} & \# best found (/300) & \# opt. found (/239) & \# opt. proved (/300) \\ \hline \hline
\textit{PSums} & 293 & 239 & 238 \\ \hline
$P_1$ & 282 & 238 & 0 \\ \hline
\textit{PSums}-$P_1$ & 300 & 239 & 238 \\ 
Hybrid              &     &     &     \\ \hline
\end{tabular}
\caption{\label{tab:summ_res3}Comparison of \textit{PSums} model with \textit{AlternatingSearchAndShaving} 
and Berman et al.'s heuristic $P_1$ with the Hybrid model.}
\end{center}
\end{table}

The proposed Hybrid algorithm was tested on the same set of $300$ instances that was used above. Results illustrating the performance of the Hybrid as well as the performance of $P_1$ and \textit{PSums} with \textit{AlternatingSearchAndShaving} are presented in Table \ref{tab:summ_res3}. The Hybrid is able to find the best solution in all $300$ cases: in the 275 instances in which both the heuristic and \textit{PSums} find the best-known solution, in 18 in which only \textit{PSums} finds the best-known and in 7 in which only the heuristic does so. The Hybrid finds the optimal solution (for those instances for which it is known) and proves optimality in as many instances as the \textit{PSums} model.
The mean run-time for the Hybrid is essentially identical to the mean run-time of \textit{PSums} with \textit{AlternatingSearchAndShaving}, equalling approximately 130 seconds.

Thus, the Hybrid is the best choice for solving this problem: it finds as good a solution as the heuristic in as little time (close to $0$ seconds), it is able to prove optimality in as many instances as the best constraint programming method, and it finds the best-known solution in all instances considered. Moreover, all these improvements are achieved without an increase in the average run-time over the \textit{PSums} model.

\section{\label{sec:discussion}Discussion}

In this section, we examine some of the reasons for the poor performance of the CP models without shaving, suggest reasons for the observed differences among the CP models, discuss the performance of the Hybrid and present some perspectives on our work.
 
\subsection{Lack of Back-Propagation \label{sec:lackofpropagation}}
In our experiments, we have some instances for which even the
\textit{PSums}-$P_1$ Hybrid with \textit{AlternatingSearchAndShaving} is
unable to find and prove the optimal solution within the $10$-minute time
limit. In fact, in many of these instances, the amount of time spent during search is 
higher than the time spent on shaving, and the run-time limit is usually 
reached during the search, rather than the shaving, phase. Further analysis of the algorithms' behaviour suggests that this poor performance of search can be explained by the lack of
back-propagation. Back-propagation refers to the pruning of the domains of
the decision variables due to the addition of a constraint on the objective
function: the objective constraint propagates ``back'' to the decision
variables, removing domain values and so reducing search. In the CP models
presented above, there is very little back-propagation.

Consider a model without shaving. Throughout search, if a
new best solution is found, the constraint $W_q \le bestW_q$, where
$bestW_q$ is the new objective value, is added to the model. However, the
domains of the switching point variables are usually not reduced in any way
after the addition of such a constraint. This can be illustrated by
observing the amount of propagation that occurs in the \textit{PSums} model when $W_q$ is
constrained. 

For example, consider an instance of the problem with $S = 6$, $N = 3$,
$\lambda = 15$, $\mu = 3$, and $B_l = 0.32$ (this instance is used in Section \ref{sec:shaving} to illustrate 
the shaving procedures). The initial domains of the
switching point variables are $[0..3]$, $[1..4]$, $[2..5]$ and $[6]$. The
initial domains of the probability variables $P(k_i)$ for each $i$, after
the addition of $W_q$ bounds provided by ${\hat {K}}$ and $\hat {\hat {K}}$,
are listed in Table \ref{tab:domains}. The initial domain of $W_q$, also
determined by the objective function values of $\hat {K}$ and $\hat {\hat
{K}}$, is $[0.22225..0.425225]$. The initial domains of $L$ and $F$, are
$[2.8175e^{-7}..6]$ and $[0..2.68]$, respectively. Upon the addition of the
constraint $W_q \le 0.306323$, where $0.306323$ is the known optimal value
for this instance, the domain of $W_q$ is reduced to $[0.22225..0.306323]$,
the domain of $L$ becomes $[1.68024..6]$ and the domain of $F$ remains
$[0..2.68]$. The domains of $P(k_i)$ after this addition are listed in Table
\ref{tab:domains}. The domains of both types of probability variables are
reduced by the addition of the new $W_q$ constraint. However, the domains of
the switching point variables remain unchanged. Therefore, even though all
policies with value of $W_q$ less than $0.306323$ are infeasible,
constraining $W_q$ to be less than or equal to this value does not result in any
reduction of the search space. It is still necessary to search through all policies in order to show that no better feasible solution exists.

\begin{table}
\begin{center}
\begin{tabular}{|l|c|c|c|c|} \hline
& \multicolumn{2}{|c|}{Before addition of $W_q \le 0.306323$} & \multicolumn{2}{|c|}
{After addition of $W_q \le 0.306323$}\\ \hline \hline
$j$ & $P(j)$ & $PSums(j)$ & $P(j)$ & $PSums(j)$ \\ \hline
$k_0$ & $[4.40235e^{-6}..0.979592]$ & $[0..1]$ & $[4.40235e^{-6}..0.979592]$ & 
$[0..0.683666]$ \\ \hline
$k_1$ & $[1.76094e^{-7}..1]$ & $[0..1]$ & $[0.000929106..1]$ & 
$[0..0.683666]$ \\ \hline
$k_2$ & $[2.8175e^{-8}..0.6]$ & $[2.8175e^{-8}..1]$ & $[0.0362932..0.578224]$ & 
$[0.0362932..0.71996]$ \\ \hline
$k_3$ & $[4.6958e^{-8}..1]$ & N/A & $[0.28004..0.963707]$ & 
N/A \\ \hline
\end{tabular}
\caption{\label{tab:domains}Domains of $P(j)$ and $PSums(j)$ variables for $j = k_0, k_1, k_2,
k_3$, before and after the addition of the constraint $W_q \le 0.306323$.}
\end{center}
\end{table}

One of the reasons for the lack of pruning of the domains of the $k_i$
variables due to the $W_q$ constraint is likely the complexity of the
expression $W_q = \frac{L}{\lambda (1 - P(k_N))} - \frac{1}{\mu}$. In the
example above, when $W_q$ is constrained to be less than or equal to
0.306323, we get the constraint $0.306323 \ge \frac{L}{15(1 - P(k_N))} -
\frac{1}{3}$ , which implies that $9.594845(1-P(k_N)) \ge L$. This explains
why the domains of both $L$ and $P(k_N)$ change upon this addition to the
model. The domains of the rest of the $P(k_i)$ variables change because of
the relationships between $P(k_i)$'s (Equation (\ref{eqn:switchProb})) and
because of the constraint that the sum of all probability variables has to
be 1. Similarly, the domains of $PSums(k_i)$'s change because these variables
are expressed in terms of $P(k_i)$ (Equation (\ref{eqn:pSums})). However, because the actual $k_i$
variables mostly occur as exponents in expressions for $PSums(k_i)$,
$P(k_i)$, and $L(k_i)$, the minor changes in the domains of $PSums(k_i)$,
$P(k_i)$, or $L(k_i)$ that happen due to the constraint on $W_q$ have no
effect on the domains of the $k_i$. This analysis suggests that it may be
interesting to investigate a CP model based on log-probabilities rather than
on the probabilities themselves. Such a model may lead to stronger
propagation.

Likewise, in the \textit{If-Then} and \textit{Dual} models, the domains of
the decision variables are not reduced when a bound on the objective
function value is added, although the domains of all probabilities, $L$ and
$F$ are modified. In both of these models, the constraints relating $F$, $L$ and the 
probability variables to the variables $k_i$ are the balance equations, which are quite 
complex. The domains of the probability variables do not
seem to be reduced significantly enough due to the new $W_q$ bound so as to
result in the pruning of $k_i$ domains because of these constraints. 

These observations served as the motivation for the proposed shaving
techniques. In particular, the $W_q$-based shaving procedure reduces the
domains of switching point variables when it can be shown that these values
will necessarily result in a higher $W_q$ value than the best one found up
to that point. This makes up for some of the lack of
back-propagation. However, even when this procedure is used after each new
best solution is found, as in \textit{AlternatingSearchAndShaving}, it is
not always able to prune enough values from the domains of the $k_i$'s so as
to be able to prove optimality within 10 minutes of run-time. It can
therefore be seen that inferences based on the value of $W_q$ are very
limited in their power and, therefore, if the domains of switching point
variables are large after shaving, then it will not be possible to prove
optimality in a short period of time.

\subsection{Differences in the Constraint Programming Models \label{sec:cpModelsDiff}}
\begin{table}[t]
\begin{center}
\begin{tabular}{|l|c|c|c|c|} \hline
{} 
& \multicolumn{2}{|c|}{21 Instances Solved by \textit{PSums} Only}
& \multicolumn{2}{|c|}{105 Instances Solved by All Models} \\ \hline
$$ & First Solution & Total & First Solution & Total \\ \hline \hline
\textit{If-Then} & 8234 & 137592 & 1201 & 37596  \\ \hline
\textit{PSums} & 6415 & 464928 & 1064 & 36590 \\ \hline
\textit{Dual} & 7528 & 102408 & 1132 & 36842 \\ \hline
\end{tabular} 
\caption{\label{tab:choicePoints}Mean number of choice points explored before the first 
solution is found and mean total number of choice points explored within 600 seconds for the three 
models without shaving and without dominance rules. For \textit{PSums}, 
the latter statistic corresponds to the total number of choice points needed to prove optimality.} 
\end{center}
\end{table}

%
\begin{figure}[t]
\begin{center}
\centerline{\rotatebox{0}{\scalebox{0.9}{\epsfig{file=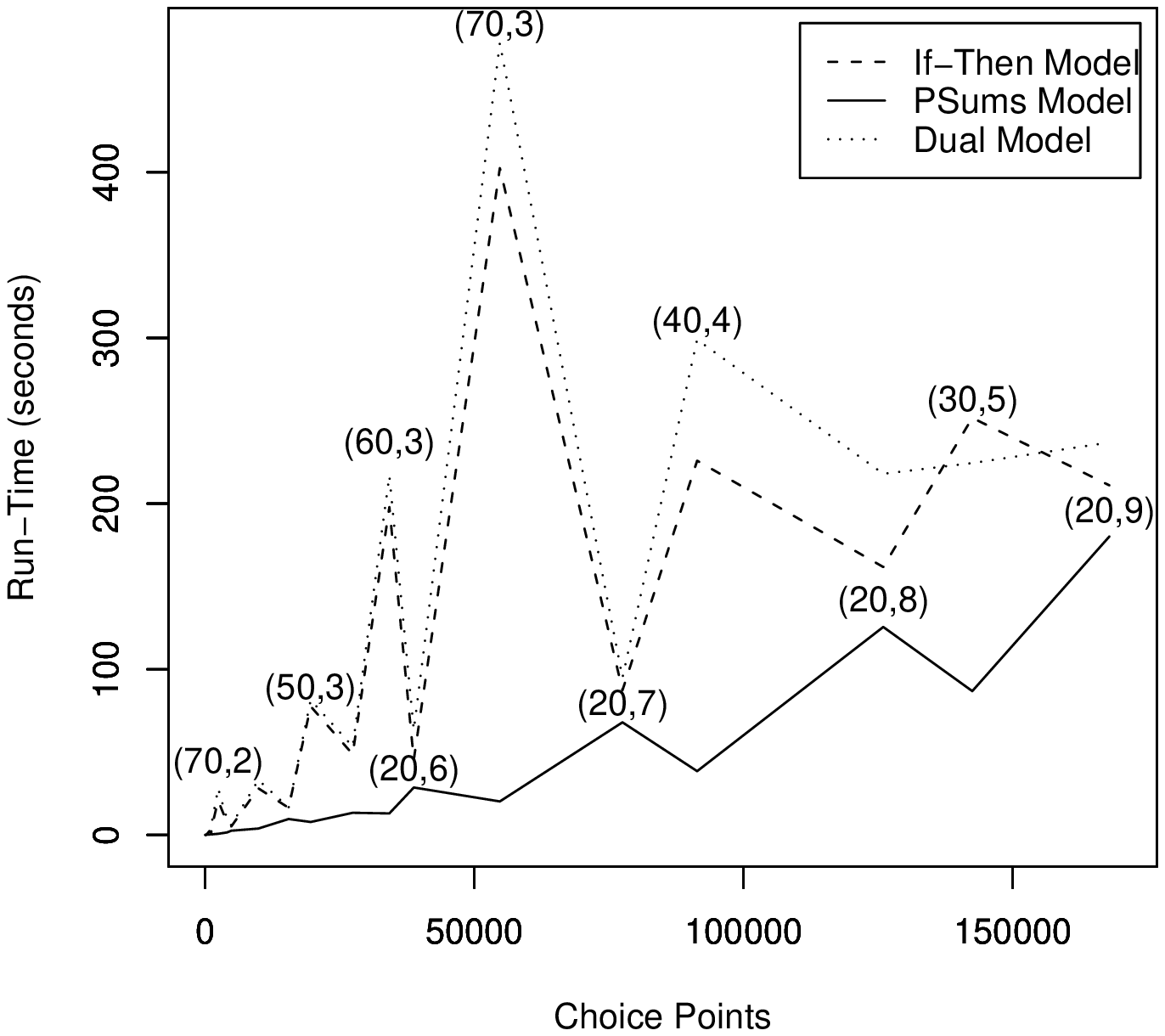}}}}
\end{center}
\caption{\label{fig:choicePointsByProblemSize}Run-times averaged over all
instances with equal number of choice points explored, for 82 instances for
which the number of choice points is the same for all models. The labels in
the points indicate the ($S$, $N$) values for the corresponding instances. }
\end{figure}

Experimental results demonstrate that the \textit{PSums} model is the best out of the three models both with and without shaving. In this section, we examine the models in more detail in an attempt to understand the reasons for such differences. 

\subsubsection{Comparison of \textit{PSums} and the other two models}
In order to analyze the performance of the models without shaving, we look at the mean number of choice points statistics, which give an indication of the size of the search space that is explored. 
To compare our three models, we look at the mean number of choice points considered before the first feasible solution is found and the mean total number of choice points explored within 600 seconds of run-time. 

In Table \ref{tab:summ_res1}, it is shown that, without shaving, the \textit{PSums} model proves optimality in 21 more instances than the other two models and that there are 105 instances in which all three models prove optimality. In Table \ref{tab:choicePoints}, we present the mean number of choice points statistics for all three models for both of these sets of instances. It can be seen that the mean number of choice points that need to be explored by the \textit{PSums} model in order to find an initial solution is smaller than for the other two models, both for the 105 instances that are eventually solved to optimality by all models and for the 21 instances that are only solved to optimality by \textit{PSums}. Because the same variable and value ordering heuristics are used in all models, this observation implies that more propagation occurs during search in the \textit{PSums} model than in the other two models. This claim is further supported by the fact that the mean total number of choice points for the 105 instances that are solved by all models is smaller for \textit{PSums} than for the other two models.

Table \ref{tab:choicePoints} also shows that, for the 21 instances that are only solved to optimality by \textit{PSums}, the mean total number of choice points is the highest for the \textit{PSums} model. Since \textit{PSums} is the only one out of the three models to solve these instances, this implies that propagation is happening faster in this model. This observation is confirmed by the results from the 105 instances that are solved by all three models: for these instances, the \textit{Dual} explores an average of 713 choice points per second, the \textit{If-Then} model explores an average of 895 choice points per second and the \textit{PSums} model explores an average of 1989 choice points per second. In other words, it appears that propagation in the \textit{PSums} model is more than twice as fast as in the other two models. 

A more detailed examination of the results showed that in 82 out of the 105
instances that were solved by all models, the number of choice points
explored, for a given instance, is the same for all models. Moreover, for
instances which have the same value of $S$ and $N$, the number of choice
points explored is equal. In Figure \ref{fig:choicePointsByProblemSize}, the
run-times of the three models as the number of choice points
increases is illustrated. In order to create this graph, we averaged the
run-times of all instances for which the number of choice points examined is
the same. Some points in the figure are labeled as $(S, N)$ in order to show
the relationship between the number of choice points, the values of $S$ and
$N$, and the run-times. We note that there is one instance, when $S = 10$
and $N = 6$, for which the number of choice points is the same as for the
instances when $S = 10$ and $N = 4$. However, for all other instances (out
of 82), there is a one-to-one correspondence between $(S, N)$ and the number
of choice points.

Several observations can be made from Figure \ref{fig:choicePointsByProblemSize}. Firstly, this graph demonstrates that propagation in the \textit{PSums} model is faster than in the other models.
Secondly, the behaviour of the \textit{PSums} model appears to be quite different from that of both the \textit{If-Then} and the \textit{Dual} models. The run-times of the \textit{PSums} model seem to be significantly influenced by the value of $N$. For example, when $S = 20$, the run-times of this model increase as $N$ increases from 6 to 9. Moreover, given two instances of which one has a high $S$ and a low $N$, and the other has a low $S$ and a high $N$, the \textit{PSums} model generally needs a longer time to solve the instance with a low $S$ and a high $N$ (e.g., compare the points (20, 7) and (40, 4), or (20, 7) and (70, 3)). For the \textit{If-Then} and the \textit{Dual} models, there are several cases when the run-times for instances with a high $S$ and a low $N$ are higher than for instances with a low $S$ and a high $N$ (e.g., compare the run-time at (70, 3) with that of (20, 7)), although the opposite happens as well (e.g., compare (50, 3) and (40, 4)). 
Thus, it appears that $N$ is the parameter influencing the run-times of \textit{PSums} the most, while for other two models, both $S$ and $N$ are influential, with $S$ having a greater effect. Although these characteristics require additional investigation, one possible reason for such differences in model behaviour could be the relationship between the number of constraints in the models and the problem parameters. 
From Table \ref{tab:modelSummary}, it is known that the number of constraints is mostly determined by the value of $S$ in the \textit{If-Then} and \textit{Dual} models (since $S$ is typically larger than $N$), and by the value of $N$ in the \textit{PSums} model. Combining observations from Table \ref{tab:modelSummary} and Figure \ref{fig:choicePointsByProblemSize}, it appears that the effect of $N$ and $S$ on the run-times is due to their influence on the number of constraints in the models.

Overall, this examination indicates that the superiority of the
\textit{PSums} model without shaving is caused by its stronger propagation
(Table \ref{tab:choicePoints}) and by the fact that propagation is
faster (Figure \ref{fig:choicePointsByProblemSize}).   

When shaving is employed, the \textit{PSums} model also performs better than the \textit{Dual} and \textit{If-Then} models, proving optimality in a greater number of instances (Table \ref{tab:summ_res1}) and finding good-quality solutions faster (Figure \ref{fig:cpModelsHeuristicMRE}). In all models, the shaving procedures make the same number of domain reductions because shaving is based on the $W_q$ and $B_l$ constraints, which are present in all models. However, the time that each shaving iteration takes is different in different models. Our empirical results show that each iteration of shaving takes a smaller amount of time with the \textit{PSums} model than with the \textit{If-Then} and \textit{Dual} models. Thus, with shaving, the \textit{PSums} model performs better than the other two, both because shaving is faster and because subsequent search, if it is necessary, is faster.
%

%
%
%
\subsubsection{Comparison of the \textit{If-Then} and the \textit{Dual} models}
A comparison of the \textit{If-Then} model with \textit{AlternatingSearchAndShaving} with the \textit{Dual} with \textit{AlternatingSearchAndShaving} using
Figure \ref{fig:cpModelsHeuristicMRE} shows that the \textit{If-Then} model is usually able to find good
solutions in a smaller amount of time. 
Moreover, as shown in Table \ref{tab:summ_res2}, the \textit{If-Then} model with \textit{AlternatingSearchAndShaving} finds the
best solution in three more instances, and 
proves optimality in two more instances, than the \textit{Dual} model with the same 
shaving procedure. 
With other shaving procedures and without shaving, the same statistics show almost no difference in the performance of the two models.
It was expected that the \textit{Dual} would outperform the \textit{If-Then} model because
it uses a simpler representation of the balance
equations and expressions for $F$ and $L$, and has a smaller number of if-then
constraints.
(there are 
Table \ref{tab:choicePoints} shows that the \textit{Dual} has to 
explore a smaller number of choice points to find the initial solution. In the 105 instances that 
both of these models solve, the total number of choice points explored by
the \textit{Dual} is also smaller.
However, the \textit{If-Then} model is 
faster, exploring, on average, 895 choice points per second compared to the average of 713 choice points per second 
explored by the \textit{Dual}. 
One possible explanation for the \textit{Dual} being slower is the fact that it has to assign more variables (via propagation) than the other models. In particular, in order to represent a switching policy, the
\textit{Dual} has to assign $S+1$ $w_j$ variables in addition to $N+1$ $k_i$
variables, with $S$ usually being much larger than $N$.
%

\subsection{Performance of the \textit{PSums}-$P_1$ Hybrid}
Experimental results demonstrate that the \textit{PSums}-$P_1$ Hybrid finds
good solutions quickly and is able to prove optimality in a large number of
instances. It should be noted however, that no synergy results from the combination: the number of instances for which optimality is proved does not increase and the run-times do not decrease. Moreover, the hybrid model finds the best-known solution in all test instances simply because there are cases in which only the \textit{PSums} model or only the heuristic is able to do so. No new best solutions are obtained by using the \textit{PSums}-$P_1$ Hybrid to solve the problem. It appears that starting the \textit{PSums} model from the solution found by the heuristic does not lead to a significant increase in the amount of propagation. Also, the fact that the heuristic finds a good-quality solution does not improve the overall performance since, if search has to be used, placing a constraint on $W_q$ that requires all further solutions to be of better quality has little effect on the domains of the decision variables. These observations imply that in order to create a more effective model for the problem, one would need to improve back-propagation by adding new constraints or reformulating the existing ones. If back-propagation is improved, the good-quality heuristic solution may result in better performance of the hybrid approach. 

\subsection{Perspectives}
\label{sec:perspectives}

The CP methods that we have developed are, in some ways, non-standard. More
common approaches when faced with the poor results of our three basic CP
models (without shaving) would have been to create better models, to develop
a global constraint that could represent and efficiently reason about some
relevant sub-structure in the problem, and/or to invent more sophisticated
variable- and value-ordering heuristics. Shaving is more a procedural
technique that must be customized to exploit a particular problem
structure. In contrast, a better model or the creation of a global
constraint are more in-line with the declarative goals of CP. Our decision
to investigate shaving arose from the recognition of the need to more
tightly link the optimization function with the decision variables and the
clear structure of the problem that both appeared and proved to be ideal for
shaving.

We believe that there is scope for better models and novel global
constraints. Modelling problem $P_1$ using CP was not straightforward
because the formulation proposed by Berman et al. contains expressions such
as Equation (\ref{eqn:Fexpression1}), where the upper and lower limits of a
sum of auxiliary variables are decision variables. Such constraints are not
typical for problems usually modelled and solved by CP and there appear to
be no existing global constraints that could have been used to facilitate
our approach. In spite of these issues, our models demonstrate that CP is
flexible enough to support queueing constraints. However, we believe it is
likely that the generalized application of CP to solve a larger class of
queueing control problems will require global constraints specific to
expressions commonly occurring in queueing theory. Given the
back-propagation analysis and the fact that the problem is to find and prove
optimality, we are doubtful that, for $P_1$, sophisticated search heuristics
will perform significantly better than our simple heuristics.

While this is both the first time that CP has been used to solve any
queueing control problem and the first time that instances of $P_1$ have
been provably solved to optimality, the work in this paper can be viewed as
somewhat narrow: it is a demonstration that a particular queueing control
problem can be solved by constraint programming techniques. The work does
not immediately deliver solutions to more general problems, however, we
believe that it does open up a number of directions of inquiry into such
problems.

\begin{enumerate}
\item It appears that there is no standard method within queueing
      theory to address queueing control optimization problems. This
      first application opens the issue of whether CP can become an approach
      of choice for such problems.
\item As noted in Section \ref{sec:introduction}, there is increasing
      interest in incorporating reasoning about uncertainty into CP-based
      problem solving. Queueing theory can provide formulations that allow
      direct calculation of stochastic quantities based on expectation. The
      challenge for CP is to identify the common sub-structures in such
      formulations and develop modelling, inference, and search techniques
      that can exploit them. 
\item Challenging scheduling problems, such as staff rostering at call
      centres \cite{Cezik08a}, consist of queues as well as rich resource
      and temporal constraints (e.g., multiple resource requirements,
      alternative resources of different speeds, task deadlines, precedence
      relations between tasks). We believe that a further
      integration of CP and queueing theory could prove to be a promising
      approach to such problems.
\item The ability to reason about resource allocation under uncertainty is
      an important component of definitions of intelligent behaviour such as
      bounded rationality \cite{Simon97a}. While we cannot claim to have
      made significant contribution in this direction, perhaps ideas from
      queueing theory can serve as the inspiration for such contributions in
      the future.
\end{enumerate}

\section{Related Work and Possible Extensions}
\label{sec:related}
Several papers exist that deal with similar types of problems as the one
considered here. For example, Berman and Larson \citeyear{Berman04} study
the problem of switching workers between two rooms in a retail
facility where the customers in the front room are divided into two
categories, those ``shopping'' in the store and those at the checkout. 
Palmer and Mitrani \citeyear{Palmer04} consider the problem of
switching computational servers between different types of jobs 
where the randomness of user demand may lead to unequal utilization of
resources. Batta, Berman and Wang \citeyear{Batta07a} study the problem of assigning
cross-trained customer service representatives to different types of calls
in a call centre, depending on estimated demand patterns for each type of
call.  These three papers provide examples of problems for which CP could
prove to be a useful approach. Investigating CP solutions to these problems
is therefore one possible direction of future work. 

Further work may also include looking at extensions of the problem discussed in this paper. For example, we may consider a more realistic problem in which there are resource constraints for one or both of the rooms, or in which workers have varying productivity.

Another direction for further work is improvement of the proposed models. In
particular, in all models, but especially in \textit{PSums}, there are
constraints with variable exponents. One idea for improving the performance
of these constraints is to explicitly represent the differences between
switching points (i.e., $k_{i+1}-k_i$) as variables.\footnote{Thanks to an
anonymous reviewer for this suggestion.} Another idea is to
investigate a model based on the logarithms of probabilities rather than the
probabilities themselves. Additionally, ways of increasing the amount of
back-propagation need to be examined.

The goal of this paper was to demonstrate the applicability of constraint programming to solving a particular queueing control problem. The main direction for future work is, therefore, to explore the possibility of further integrating CP and queueing theory in an attempt to address stochastic scheduling and resource allocation problems. Such problems are likely to involve complex constraints, both for encoding the necessary stochastic information and for stating typical scheduling requirements such as task precedences or resource capacities. Combining queueing theory with CP may help in solving such problems.

%
%
\section{Conclusions}
\label{sec:conclusions}
In this paper, a constraint programming approach is proposed for the problem
of finding the optimal states to switch workers between the front room and
the back room of a retail facility under stochastic customer arrival and
service times. This is the first work of which we are aware that examines
solving such stochastic queueing control problems using constraint
programming. The best pure CP method proposed is able to prove optimality in
a large proportion of instances within a 10-minute time limit. Previously,
there existed no non-heuristic solution to this problem aside from naive
enumeration. As a result of our experiments, we hybridized the best pure CP model with the heuristic proposed for this problem in the literature. This hybrid technique is able to achieve performance that is equivalent to, or better than, that of each of the individual approaches alone: it is able to find very good solutions in a negligible amount of time due to the use of the heuristic, and is able to prove optimality in a large proportion of problem instances within 10 CPU-minutes due to the CP model. 

This paper demonstrates that constraint programming can be a good approach for solving a 
queueing control optimization problem. For queueing problems for which
optimality is important or heuristics do not perform well, CP may prove to
be an effective methodology. 

\appendix
\section{Constraint Derivations}
In this section, the derivations of constraints in the \textit{PSums} model and expressions for the auxiliary variables and constraints are 
presented. 

\subsection{Closed-form Expressions in the \textit{PSums} model}
The \textit{PSums} model has two sets of probability variables, $P(k_i)$, $i = 0, 1, \dots, N$, the probability of there being $k_i$ customers in the front room, and $PSums(k_i)$, $i = 0, 1, \dots, N-1$, the sum of probabilities between two switching point variables. 
Balance equations are not explicitly stated in this model. However, expressions for $P(k_i)$ and $PSums(k_i)$ are derived in such a way that the
balance equations are satisfied. The technique used for these derivations is similar
to that used by Berman et al. \citeyear{Berman05a} to simplify the calculation of probabilities.

Consider the balance equation $P(j)\lambda = P(j+1)i\mu$, which is true for $j = k_{i-1}, k_{i-1}+1, \dots, k_i-1$ and any $i \in \{1, 2 \dots, N\}$. In particular, this subset of the balance equations is 
\begin{eqnarray} \nonumber
P(k_{i-1}) \lambda &=& P(k_{i-1}+1) i \mu \\ \nonumber
P(k_{i-1}+1) \lambda &=& P(k_{i-1}+2) i \mu \\ \nonumber
{} & \vdots & {} \\ \nonumber
P(k_i-1) \lambda &=& P(k_{i}) i \mu. \nonumber
\end{eqnarray}

\noindent These equations imply the following expressions:

\begin{eqnarray} \label{eqn:balEqns}
\frac{P(k_{i-1})\lambda}{i\mu} &=& P(k_{i-1}+1)\\ \nonumber
\frac{P(k_{i-1}+1)\lambda}{i\mu} &=& P(k_{i-1}+2)\\ \nonumber
{} & \vdots & {} \\ \nonumber
\frac{P(k_i-1)\lambda}{i\mu} &=& P(k_{i}). \nonumber
\end{eqnarray}

Combining these together, we get $P(k_i) = {\left(\lambdamuiRatio\right)}^{k_i - k_{i-1}}P(k_{i-1})$ for all $i$ from 1 to $N$, or 
\begin{eqnarray}
P(k_{i+1}) &=& {\left(\ratio\right)}^{k_{i+1}-k_i}P(k_i), \;\;\; \forall i \in \{0, 1, \dots, N-1\}. \label{eqn:switchProb2}
\end{eqnarray}
This equation is included in the \textit{PSums} model and has previously been stated as Equation (\ref{eqn:switchProb}).

Similarly, from Equation (\ref{eqn:balEqns}), we see that $P(k_{i-1}+1) = \lambdamuiRatio P(k_{i-1})$ for all $i$ from 1 to $N$, or  
\begin{eqnarray} \label{eqn:P(ki+1)}
P(k_i+1) = \ratio P(k_i), \;\;\; \forall i \in \{0, 1, \dots, N\}.
\end{eqnarray}   

Using Equation (\ref{eqn:P(ki+1)}), an expression for $PSums(k_i)$, the sum of probabilities $P(j)$ for $j$ between $k_i$ and $k_{i+1}-1$, can be derived as follows: 
\begin{eqnarray} \nonumber
PSums(k_i) &=& \sum_{j=k_i}^{k_{i+1}-1} P(j) \\ \nonumber
{} &=& P(k_i)+P(k_i+1)+P(k_i+2)+ \dots +P(k_{i+1}-1) \\ \nonumber
{} &=& P(k_i)+P(k_i)\ratio+P(k_i){\left(\ratio\right)}^2 \\ \nonumber
{} &{}& + \dots + P(k_i){\left(\ratio\right)}^{k_{i+1}-k_i-1} \\ \nonumber
{} &=& P(k_i)\left[1+\ratio+{\left(\ratio \right)}^2 + \dots + {\left(\ratio\right)}^{k_{i+1}-k_i-1} \right] \\ \nonumber
\\ \nonumber
\\ 
{} &=& \left\{ 
\begin{array}{rr}
P(k_i) \frac{\displaystyle1-{\left[\displaystyle\ratio\right]}^{k_{i+1}-k_i}}{\displaystyle1-\displaystyle\ratio} & \mbox{if } \ratio \neq 1\\
\\
P(k_i) (\diff)  & otherwise. 
\end{array} \right. \label{eqn:pSums2} 	
\end{eqnarray}
\noindent The last step in the derivation is based on the observation that the expression $[1+\ratio+{\left(\ratio \right)}^2 + \dots + {\left(\ratio\right)}^{k_{i+1}-k_i-1}]$ is a geometric series with the common ratio $\ratio$. When $\ratio$ is 1, the expression is simply a sum of $k_{i+1}-k_i$ ones. The expression for $PSums(k_i)$ has been previously stated as Equation (\ref{eqn:pSums}). 

\subsubsection{Expected Number of Workers Constraint}
$F$ can be expressed in terms of $P(k_i)$ and $PSums(k_i)$ using the following sequence of steps:
\begin{eqnarray}
F &=& \sum_{i=1}^{N}{\sum_{j=k_{i-1}+1}^{k_{i}} iP(j) } \nonumber \\
{} &=& \sum_{i=1}^{N}{i\left[P(k_{i-1}+1)+P(k_{i-1}+2)+ \dots + P(k_i-1) + P(k_i) \right]} \nonumber \\
{} &=& \sum_{i=1}^{N}{i\left[PSums(k_{i-1})-P(k_{i-1})+P(k_i) \right]}. \label{eqn:PSumsFexpression2}
\end{eqnarray}
\subsubsection{Expected Number of Customers Constraints}
The equation for $L$ can be derived in a similar manner:
\begin{eqnarray}
L &=& \sum_{j=k_0}^{k_N}jP(j) \nonumber \\
{} &=& \sum_{j=k_0}^{k_1-1}jP(j) + \sum_{j=k_1}^{k_2-1}jP(j) + \dots + \sum_{j=k_{N-1}}^{k_N-1}jP(j) + k_NP(k_N) \nonumber \\
{} &=& L(k_0)+L(k_1)+ \dots + L(k_{N-1})+ k_NP(k_N) \nonumber \\
{} &=& \sum_{i = 0}^{N-1} L(k_i) + k_NP(k_N) \label{eqn:PSumsLexpression2}
\end{eqnarray}
where\label{Lki2}
\begin{eqnarray} \nonumber
L(k_i) &=& k_iP(k_i)+(k_i+1)P(k_i+1)+(k_i+2)P(k_i+2) + \dots + (k_{i+1}-1)P(k_{i+1}-1) \\ \nonumber
{} &=& k_iP(k_i)+ k_iP(k_i+1)+k_iP(k_i+2) + \dots + k_iP(k_{i+1}-1)+P(k_i+1) \\ \nonumber
{} &+& 2P(k_i+2) + \dots + (k_{i+1}-k_i-1)P(k_{i+1}-1) \\ \nonumber 
{} &=& k_i[P(k_i)+P(k_i+1)+P(k_i+2)+ \dots + P(k_{i+1}-1)]+ P(k_i+1) \\ \nonumber
{} &+& 2P(k_i+2)+ \dots + (k_{i+1}-k_i-1)P(k_{i+1}-1) \\ \nonumber
{} &=& k_iPSums(k_i)+P(k_i)\ratio + 2P(k_i){\left(\ratio\right)}^2 \\ \nonumber
{} &+& \dots + (k_{i+1}-k_i+1)P(k_i){\left(\ratio\right)}^{k_{i+1}-k_i-1} \\ \nonumber
{} &=& k_iPSums(k_i) + P(k_i)\ratio \\ \nonumber
{} & \times & \left[1+2\ratio+3{\left(\ratio\right)}^2+ \dots \right. \\ \nonumber
{} & + & \left. (k_{i+1}-k_i-1)P(k_i){\left(\ratio\right)}^{k_{i+1}-k_i-2} \right] 
\end{eqnarray}
\begin{eqnarray}
{} &=& k_iPSums(k_i)+P(k_i)\ratio\sum_{n=0}^{k_{i+1}-k_i-1}{n{\left(\ratio\right)}^{n-1}}\\ \nonumber
{} &=& k_iPSums(k_i)+P(k_i)\ratio \\ \nonumber 
{} & {} & \times \left[\frac{ {\left(\ratio\right)}^{k_{i+1}-k_i-1}(k_i-k_{i+1}) + {\left(\ratio\right)}^{k_{i+1}-k_i}(k_{i+1}-k_i-1)+1 }{{\left(1-\ratio\right)}^2} \right]. \nonumber
\end{eqnarray}

\subsection{Auxiliary Variables}
All constraint programming models contain Equation (\ref{eqn:betaSumEquation}) (restated below as Equation (\ref{eqn:betaSumEquation2})) for defining the $\beta Sum(k_i)$ variables, which are necessary for expressing an auxiliary constraint that ensures that the balance equations have a unique solution. The validity of this equation is proved by the following derivation, which uses the formula for the sum of a finite geometric series in the last step:
%
\begin{eqnarray}\label{betaSum2}
\beta Sum(k_i) &=& \sum_{j=k_{i-1}+1}^{k_i} \beta_j \nonumber \\
{} &=& {\left({\ratioSimple}\right)}^{k_{i-1}+1-k_0}{\left({\iRatio}\right)}^{k_{i-1}+1-k_{i-1}}X_i \nonumber \\
{} &+& {\left({\ratioSimple}\right)}^{k_{i-1}+2-k_0}{\left({\iRatio}\right)}^{k_{i-1}+2-k_{i-1}}X_i + \dots +
{\left({\ratioSimple}\right)}^{k_{i}-k_0}{\left({\iRatio}\right)}^{k_{i}-k_{i-1}}X_i \nonumber \\
{} &=& X_i{\left(\ratioSimple\right)}^{k_{i-1}-k_0+1}{\left({\iRatio}\right)} \nonumber \\
{} &\times& \left[1+\ratioSimple\iRatio+{\left(\ratioSimple\right)}^{2}{\left(\iRatio\right)}^2 +  \dots + {\left(\ratioSimple\right)}^{k_1-k_0-(k_{i-1}-k_0+1)}{\left(\iRatio\right)}^{k_i-k_{i-1}-1}\right] \nonumber \\
{} &=& X_i{\left(\ratioSimple\right)}^{k_{i-1}-k_0+1}\left(\iRatio\right) \sum_{n=0}^{k_i-k_{i-1}-1}{\left(\lambdamuiRatio\right)}^n \nonumber \\
\nonumber \\
\nonumber \\
{} &=& \left\{ 
\begin{array}{rr}
X_i{\left(\displaystyle\ratioSimple\right)}^{k_{i-1}-k_0+1}\left(\displaystyle\iRatio\right) \left[\frac{1-{\left(\displaystyle\lambdamuiRatio\right)}^{k_i-k_{i-1}}}{1-\left(\displaystyle\lambdamuiRatio\right)} \right] & \mbox{if } \lambdamuiRatio \neq 1\\
\\
X_i{\left(\displaystyle\ratioSimple\right)}^{k_{i-1}-k_0+1}\left(\displaystyle\iRatio\right) (k_i-k_{i-1}) & otherwise. 
\end{array} \right. \label{eqn:betaSumEquation2}	
\end{eqnarray}
\\
\paragraph{Acknowledgments} This research was supported in part by the
Natural Sciences and Engineering Research Council and ILOG, S.A.  Thanks to
Nic Wilson and Ken Brown for discussions and comments on this work, and to Tom Goguen for careful proofreading of the final copy. A
preliminary version of parts of this work has been previously published
\cite{Terekhov07a}.
%
%
%
\bibliographystyle{theapa}

%
\end{document}